\documentclass{article}
\usepackage{arxiv}

\usepackage{amsthm}
\usepackage{hyperref}
\usepackage{url}
\usepackage{hyperref}
\usepackage{url}
\usepackage[utf8]{inputenc} 
\usepackage[T1]{fontenc}    
\usepackage{booktabs}       
\usepackage{amsfonts}       
\usepackage{nicefrac}       
\usepackage{microtype}      
\usepackage{mathtools}
\usepackage{amssymb}
\usepackage[shortlabels]{enumitem}
\usepackage{xcolor,colortbl}
\usepackage{comment}
\usepackage{subcaption}
\usepackage{graphicx}
\usepackage{makecell}
\usepackage{multirow}
\usepackage{float}
\usepackage{booktabs,caption}
\usepackage[flushleft]{threeparttable}
\usepackage{ulem}
\usepackage{algorithm}
\usepackage{algpseudocode}
\usepackage[T1]{fontenc}
\usepackage{lmodern}

\newtheorem{theorem}{Theorem}

\newtheorem{remark}[theorem]{Remark}
\newtheorem{lemma}[theorem]{Lemma}

\newtheorem{assumption}[theorem]{Assumption}
\newtheorem{proposition}[theorem]{Proposition}

\numberwithin{theorem}{section}

\title{Statistical Guarantees for Distributionally Robust Optimization with Optimal Transport and OT-Regularized Divergences}

\author{Jeremiah Birrell\\
  Department of Mathematics\\
  Texas State University\\
  San Marcos, TX,  USA \\
  \texttt{jbirrell@txstate.edu} \\
   \And
Xiaoxi Shen\\
  Department of Mathematics\\
  Texas State University\\
  San Marcos, TX,  USA \\
  \texttt{rcd67@txstate.edu} \\
}
\date{\today}

\begin{document}

\maketitle

\begin{abstract}
We study finite-sample statistical performance guarantees for distributionally robust optimization (DRO) with optimal transport (OT) and OT-regularized divergence model neighborhoods.  Specifically, we derive concentration inequalities for supervised learning via DRO-based adversarial training, as commonly employed to enhance the adversarial robustness of machine learning models.  Our results apply to a wide range of OT cost functions, beyond the $p$-Wasserstein case studied by previous authors.  In particular, our results are the first to: 1) cover soft-constraint norm-ball OT cost functions; soft-constraint costs have been shown empirically to enhance  robustness when used in adversarial training, 2) apply to the combination of adversarial sample generation and adversarial reweighting that is induced by using OT-regularized  $f$-divergence model neighborhoods; the added reweighting mechanism  has also been shown empirically to further improve performance.  In addition, even in the $p$-Wasserstein case, our bounds exhibit better behavior as a function of the DRO neighborhood size than previous results when applied to the adversarial setting.
\end{abstract}

\keywords{Adversarial Robustness \and Optimal Transport \and Information Divergence \and Distributionally Robust Optimization \and  Concentration Inequality}

\section{Introduction}

Distributionally robust optimization (DRO) is a technique for regularizing a stochastic optimization problem, $\inf_{\theta}E_{P}[\mathcal{L}_\theta]$, by replacing the $P$-expectation with the worst-case expected value over some neighborhood of `nearby' models, $\mathcal{U}(P)$, also known as the ambiguity set or uncertainty region in some literature. This results in the minimax problem
\begin{align}\label{eq:DRO_gen}
    \inf_\theta \sup_{Q\in\mathcal{U}(P)}E_Q[\mathcal{L}_\theta]\,.
\end{align}
A variety of model neighborhoods have been studied and employed in prior work, including  maximum mean discrepancy (MMD) \cite{NEURIPS2019_1770ae9e},   $f$-divergence neighborhoods \cite{doi:10.1287/opre.1100.0821,Javid2012,Hu2013,10.2307/23359484,doi:10.1287/opre.2018.1786},  (conditional) moment constraints \cite{doi:10.1287/opre.1090.0795,doi:10.1287/opre.1090.0741,doi:10.1287/opre.2014.1314,2023arXiv230805414B}, smoothed $f$-divergences \cite{liu2023smoothed},  Sinkhorn divergence \cite{wang2025sinkhorn},  Wasserstein neighborhoods \cite{EsfahaniKuhn2018,shafieezadeh2019regularization,wu2022generalization,li2022general,doi:10.1287/moor.2022.1275},  and  general optimal-transport (OT) neighborhoods \cite{doi:10.1287/moor.2018.0936,azizian2023regularization}.    

DRO has been used effectively in a number of applications; see, e.g.,  \cite{kuhn2025distributionally} for an overview. Our  focus is on adversarial robustness in machine learning; the vulnerability  to adversarial samples is a well known weakness  of machine learning models (especially deep learning) \cite{papernot2016limitations,goodfellow2014explaining}. Adversarial samples are  inputs that are intentionally modified by an adversary to mislead the model, e.g., causing a harmful misclassification of the input. Adversarial training is a popular class of techniques for mitigating this issue; in adversarial training,  adversarial samples are constructed in a way that mimics the efforts of an attacker and are then employed during training.  Having such a simulated attacker compete with the model during training results in a more robust trained model   \cite{papernot2017mask,madry2018towards,hu2018does,wang2019improving_mart,zhang2019theoreticallytradeOff,zhang2020geometry,dong2020adversarial,regniez2021distributional,bui_UDR_2022unified,dong2023towards,birrell2025optimal}. Many commonly used adversarial training methods can be formulated as DRO problems \eqref{eq:DRO_gen}.

Here we focus on adversarial training based on  DRO with OT and OT-regularized divergence neighborhoods.  OT-regularized divergences were recently introduced in \cite{birrell2025optimal} for the purpose of enhancing  adversarial robustness in deep  learning.  Specifically, we consider OT-regularized $f$-divergences, defined via an infimal convolution of a OT cost, $C$, and an $f$-divergence, $D_f$, as follows:
\begin{align}\label{eq:D_f_c_def}
D_f^c(\nu\|\mu)\coloneqq \inf_{\eta\in\mathcal{P}(\mathcal{Z})}\{D_f(\eta\|\mu)+C(\eta,\nu)\}\,,
\end{align}   
where  $\mathcal{P}(\mathcal{Z})$ denotes the space of  probability measures on  $\mathcal{Z}$. In \cite{birrell2025optimal} it was shown that DRO  with OT-regularized $f$-divergence neighborhoods corresponds to adversarial sample generation (due to the OT cost) along with adversarial sample reweighting (due to the $f$-divergence); combining these two mechanisms   enhances previous DRO-based approaches to adversarial training,  such as  PGD   \cite{madry2018towards}, TRADES  \cite{zhang2019theoreticallytradeOff},  MART \cite{wang2019improving_mart}, and UDR  \cite{bui_UDR_2022unified}.

In this work we derive finite-sample statistical guarantees, in the form of concentration inequalities, for DRO with OT and OT-regularized $f$-divergences, as applied to supervised learning.    Our main contributions are the following:
\begin{enumerate}
    \item We derive concentration inequalities for OT-DRO that apply to a much more general class of OT cost functions than considered in previous works, in both the classification and regression settings; see Theorems \ref{thm:conc_ineq_min_value} and \ref{thm:OT_DRO_emp_optimizer}. The increased generality facilitates applications to adversarial training methods which were  not covered by prior approaches.  In addition, even in the $p$-Wasserstein case studied in previous works, our results have better dependence on the neighborhood size than  other methods that treat the robust training setting; see Remark \ref{remark:r_dependence}.
    \item  We provide the first  analysis of  statistical guarantees for OT-regularized $f$-divergence DRO, as used to enhance the adversarial robustness of classification models; see Theorems \ref{thm:OT_reg_DRO_emp_objective_bound} and
\ref{thm:OT_reg_DRO_conc}.   This recent class of DRO methods  combines sample reweighting with adversarial sample generation and has been shown empirically to improve adversarial training performance.  
 \end{enumerate}

\subsection{Comparison with Related Works}
Statistical guarantees for DRO with  model neighborhoods of various types have been studied previously by a number of authors.  First we note the methods based on   Wasserstein-metric neighborhoods \cite{lee2018minimax,shafieezadeh2019regularization,an2021generalization,blanchet2022confidence,gao2023finite,azizian2023exact,gao2024wasserstein,aolaritei2026wasserstein}; see also the tutorial article \cite{blanchet2021statistical}.  Of these, \cite{lee2018minimax} focus on a fixed neighborhood size $r$, which is the appropriate setting for adversarial training,  \cite{shafieezadeh2019regularization,blanchet2022confidence,gao2023finite,aolaritei2026wasserstein} focus on   neighborhood-size that shrinks to $0$ as the number of samples $n\to\infty$, as is appropriate for use as a regularization term to protect against overfitting, and \cite{an2021generalization,azizian2023exact,gao2024wasserstein} cover both cases.  However, to the authors' knowledge, there are no existing works showing convergence of the empirical OT-DRO problem to population problem which cover more general OT-cost functions, beyond the $p$-Wasserstein family. In this work we consider a  class of OT cost functions that simultaneously generalize $p$-Wasserstein and norm-ball constraint costs and focus on bounds that behave well for a fixed, small neighborhood size; both factors  facilitate applications to adversarial robustness.  

Another stream of work focuses on  information-theoretic model neighborhoods \cite{ben2013robust,lam2017empirical,bertsimas2018data,duchi2019variance,an2021generalization,duchi2021statistics,duchi2021learning}.  Of these, the closest to the present work is  \cite{duchi2021learning}, which obtained finite-sample statistical guarantees   for DRO with Cressie-Read divergences, a family of $f$-divergences which are closely related to the $\alpha$-divergences \eqref{eq:f_alpha_def}, as a method for addressing distributional shifts.  However, information-theoretic neighborhoods cannot account for a change in the support of the distribution, as is required for applications to adversarial robustness.   To unify the OT and information theoretic approaches,  our statistical guarantees apply to the OT-regularized $f$-divergences \eqref{eq:D_f_c_def}, developed in \cite{birrell2025optimal} for application to adversarial robustness; these divergences   interpolate between the OT and information-theoretic approaches, thereby allowing for adversarial sample generation along with sample reweighting.  Our simultaneous study of more general OT-cost and general $f$-divergences requires several innovative techniques. In particular, the Cressie-Read divergence case allows for an analytical simplification that is not possible for the more general class of  $f$-divergences    covered by our results.

Finally, we mention several other  recent methods that combine OT and information-theoretic ingredients in DRO.  The approach  of \cite{azizian2023exact} studies Wasserstein DRO with a KL-divergence penalty between  the transport plan and a Gaussian mixture centered on the samples; this is a very different approach to the two-stage transport-reweighting mechanism inherent in our OT-regularized divergences, both conceptually and computationally. Closer to our approach are those of \cite{2023arXiv230805414B,liu2023smoothed}. While distinct in general, the approach of \cite{2023arXiv230805414B} overlaps with the OT-regularized divergence framework \cite{birrell2025optimal} studied here in the $f$-divergence case, but \cite{2023arXiv230805414B} does not consider the statistical properties of the method.  The frameworks of  \cite{liu2023smoothed} and \cite{birrell2025optimal} are distinct when applied to $f$-divergences, and the former derives statistical guarantees that apply to smoothing via $1$-Wasserstein and  L{\'e}vy-Prokhorov metrics (see their Section 5.2); specifically,  they show that their empirical DRO problem upper bounds the population DRO problem with high-probability.  This   conclusion is weaker than what is produced by the results in this paper, which show convergence of the empirical problem to the population problem as $n\to\infty$; the latter type of result is more meaningful in the adversarial training setting. Moreover, we obtain results for more general OT cost functions.  The  OT-regularized divergence approach studied here has already been shown to be a practical computational tool for enhancing the  adversarial robustness of machine learning models, further motivating the study of its statistical properties in settings adapted to that application.

\section{Background and Main Results}\label{sec:result_summary}
In this section we provide the necessary background on DRO with OT and OT-regularized $f$-divergences and summarize our main results.  Details regarding the required technical assumptions, along with the proofs, can be found in the subsequent sections.  
\subsection{Background}
We let $\mathcal{Z}$ denote a Polish space (i.e., a complete separable metric space)    and let  $\mathcal{P}(\mathcal{Z})$   denote the space of Borel probability measures on $\mathcal{Z}$.    We will use the term   cost-function to refer to   a lower semicontinuous (LSC) function $c:\mathcal{Z}\times\mathcal{Z}\to[0,\infty]$.  The associated   optimal-transport (OT) cost is defined by $C:\mathcal{P}(\mathcal{Z})\times\mathcal{P}(\mathcal{Z})\to[0,\infty]$, \begin{align}
    C(\mu,\nu)\coloneqq \inf_{\substack{\pi\in\mathcal{P}(\mathcal{Z}\times\mathcal{Z}):\\ \pi_1=\mu,\pi_2=\nu}}\int cd\pi\,,
    \end{align}
    where  $\pi_1,\pi_2$ denote the marginal distributions; see, e.g., \cite{villani2008optimal} for background on optimal transport.  The following result, adapted from \cite{doi:10.1287/moor.2018.0936,doi:10.1287/moor.2022.1275,zhang2025short}, provides an important dual formulation of the OT-DRO problem.
\begin{proposition}\label{prop:OT_DRO}
   Let   $c$ be a cost function that satisfies $c(z,z)=0$ for all $z\in\mathcal{Z}$,  $\mathcal{L}:\mathcal{Z}\to\mathbb{R}$ be measurable and bounded below, and  $P\in\mathcal{P}(\mathcal{Z})$.  Then for all  $r>0$ we have
\begin{align}\label{eq:OT_DRO}
&\sup_{Q:C(P,Q)\leq r}E_Q[\mathcal{L}]
=\inf_{\lambda>0}\{\lambda r+ E_{{P}}[\mathcal{L}^c_{\lambda}]\}\,,
\end{align}  
where
\begin{align}\label{eq:L_c_def}
&  \mathcal{L}^c_{\lambda}(z)\coloneqq  \sup_{\tilde z\in\mathcal{Z}}\{ \mathcal{L} (\tilde z) -  \lambda c(z,\tilde z)\}
\end{align}
and we employ the convention $\infty-\infty\coloneqq-\infty$.
\end{proposition}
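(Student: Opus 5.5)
The plan is to prove the two inequalities separately. Weak duality ($\leq$) is elementary; strong duality ($\geq$) is where the real work lies.

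\emph{Weak duality.} Fix $Q$ with $C(P,Q)\leq r$ and $\lambda>0$. For any $\epsilon>0$, pick a coupling $\pi$ with $\pi_1=P$, $\pi_2=Q$ and $\int c\,d\pi\leq r+\epsilon$. Pointwise, $\mathcal{L}(\tilde z)\leq \mathcal{L}^c_\lambda(z)+\lambda c(z,\tilde z)$. Integrating against $\pi$ gives
\begin{align*}
E_Q[\mathcal{L}]\leq E_P[\mathcal{L}^c_\lambda]+\lambda(r+\epsilon).
\end{align*}
Three technical points need care here. First, because $\mathcal{L}$ is bounded below and $c(z,z)=0$, we have $\mathcal{L}^c_\lambda\geq\mathcal{L}$, so the integrals are well defined in $(-\infty,\infty]$. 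Second, $\mathcal{L}^c_\lambda$ may fail to be Borel measurable, so I would use outer integrals, or note that it is universally measurable as a projection-type supremum. Third, the convention $\infty-\infty=-\infty$ handles the points where $c=\infty$. Letting $\epsilon\to0$ and then taking the supremum over $Q$ and the infimum over $\lambda$ gives $\leq$.

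\emph{Strong duality.} Here I would follow the route of \cite{zhang2025short}, which is short and avoids heavy machinery. Define $v(r)\coloneqq\sup\{E_Q[\mathcal{L}]:C(P,Q)\leq r\}$. This function is nondecreasing and concave on $[0,\infty)$: for concavity, mix two near-optimal $Q$'s and mix their couplings, which is possible because the constraint is linear in the coupling. If $v(r)=\infty$ there is nothing to prove. Otherwise $v$ is finite on $[0,\infty)$, since concavity and monotonicity propagate finiteness. Since $r>0$ is interior, $v$ has a supergradient $\lambda^*\geq0$ at $r$:
\begin{align*}
v(s)\leq v(r)+\lambda^*(s-r)\quad\text{for all } s\geq0.
\end{align*}
Then for every coupling $\pi$ with first marginal $P$,
\begin{align*}
\int \mathcal{L}(\tilde z)\,d\pi\leq v\Big(\int c\,d\pi\Big)\leq v(r)-\lambda^* r+\lambda^*\int c\,d\pi.
\end{align*}
Equivalently, $\int[\mathcal{L}(\tilde z)-\lambda^*c(z,\tilde z)]\,d\pi\leq v(r)-\lambda^*r$.

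Next I would specialize $\pi$ to couplings of the form $\pi(dz,d\tilde z)=P(dz)\delta_{T(z)}(d\tilde z)$ for measurable $\epsilon$-optimal selectors $T$ in the supremum defining $\mathcal{L}^c_{\lambda^*}$. This yields $E_P[\mathcal{L}^c_{\lambda^*}]\leq v(r)-\lambda^*r+\epsilon$, and hence the dual value is at most $v(r)$.

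Two loose ends remain. If $\lambda^*=0$, I would instead apply the same argument with any $\lambda>0$ small and check that $v(r)$ is still attained as the limit $\lambda\to0^+$; one uses $\lambda\downarrow\lambda^*$ via the supergradient inequality with $\lambda^*+\delta$, which still holds. The second loose end, the existence of a measurable $\epsilon$-optimal selector $T$, is what I expect to be the main obstacle. The suprema defining $\mathcal{L}^c_\lambda$ are over a Polish space with an LSC cost and a merely measurable $\mathcal{L}$. I would invoke the Jankov--von Neumann uniformization theorem: the set $\{(z,\tilde z):\mathcal{L}(\tilde z)-\lambda c(z,\tilde z)>\mathcal{L}^c_\lambda(z)-\epsilon\}$, suitably truncated where the supremum is infinite, is Borel. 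It therefore admits a universally measurable section, and $P$-a.e.\ it agrees with a Borel map. This gives the required couplings, and $\mathcal{L}^c_\lambda$ is universally measurable, so $E_P$ makes sense.
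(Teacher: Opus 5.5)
The paper does not prove this proposition; it imports it from \cite{doi:10.1287/moor.2018.0936,doi:10.1287/moor.2022.1275,zhang2025short}. Your outline is essentially the route of \cite{zhang2025short}: weak duality, a supergradient of the concave value function $v$ at the interior point $r$, then interchanging $\sup_\pi$ and $E_P$ through $\epsilon$-optimal selectors. Weak duality and the concavity/supergradient step are fine, but three later steps fail as written.

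The substantive one is the case $\lambda^*=0$, where $\lambda^*+\delta$ is \emph{not} again a supergradient. If $0\in\partial v(r)$ then $v\le v(r)$ on $[0,\infty)$. The inequality $v(s)\le v(r)+\delta(s-r)$ then fails for $s<r$ as soon as $v$ is flat to the left of $r$. For example, the PGD cost $c_\delta$ takes only the values $0$ and $\infty$, so $v$ is constant and $0$ is the only supergradient. The repair is easy. From $v\le v(r)$ you get $\int[\mathcal{L}(\tilde z)-\delta c(z,\tilde z)]\,d\pi\le v(r)$ for every finite-cost $\pi$ with $\pi_1=P$. The selector step then yields $\delta r+E_P[\mathcal{L}^c_\delta]\le v(r)+\delta r+\epsilon$, and you let $\delta,\epsilon\to0$.

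The other two defects are in the interchange step.

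\emph{Infinite-cost selector couplings.} Your bound $\int[\mathcal{L}-\lambda^*c]\,d\pi\le v(r)-\lambda^*r$ was obtained by evaluating $v$ at $\int c\,d\pi$, so it only holds for couplings of finite cost. The coupling $P(dz)\delta_{T(z)}(d\tilde z)$ can have $\int c(z,T(z))\,P(dz)=\infty$. You only know $\lambda^*c(z,T(z))\le\mathcal{L}(T(z))-\mathcal{L}(z)+\epsilon$, and $\mathcal{L}$ may be unbounded above. Truncate: let $T_k\coloneqq T$ on $\{c(z,T(z))\le k\}$ and $T_k(z)\coloneqq z$ elsewhere, which costs at most $k$ because $c(z,z)=0$. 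The integrands are bounded below by $\inf\mathcal{L}-\epsilon$ and converge pointwise, so Fatou's lemma transfers the bound to $T$.

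\emph{The selection set is not Borel.} The set $\{(z,\tilde z):\mathcal{L}(\tilde z)-\lambda c(z,\tilde z)>\mathcal{L}^c_\lambda(z)-\epsilon\}$ need not be Borel. Here $\mathcal{L}^c_\lambda$ is merely upper semianalytic, and this set involves its co-analytic sublevel sets. Jankov--von Neumann needs an analytic set, so it does not apply as stated. You have two standard fixes:
\begin{enumerate}
\item Cite the $\epsilon$-optimal selection theorem for semianalytic functions (Bertsekas--Shreve, Prop.~7.50).
\item Apply Jankov--von Neumann to the Borel sets $\{(z,\tilde z):\mathcal{L}(\tilde z)-\lambda c(z,\tilde z)>k\epsilon\}$, $k\in\mathbb{Z}$, whose projections are the analytic sets $\{\mathcal{L}^c_\lambda>k\epsilon\}$. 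Then use the $k$-th section on $\{k\epsilon<\mathcal{L}^c_\lambda\le(k+1)\epsilon\}$, with the usual modification where $\mathcal{L}^c_\lambda=\infty$.
\end{enumerate}
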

\begin{remark}
Note that $\mathcal{L}^c_\lambda$ is universally measurable; we do not distinguish between $P$ and its completion in our notation. $\mathcal{L}^c_{\lambda}$   is known as the $c$-transform in the optimal transport literature; see Definition  5.2 in \cite{villani2008optimal}.
\end{remark}
In practice, the supreumum over $\tilde{z}$ in \eqref{eq:L_c_def} represents the generation of an adversarial sample, $\tilde{z}$, paired with the original sample, $z$, i.e., the action of the simulated attacker.

    The OT-regularized divergences, \eqref{eq:D_f_c_def}, mix an OT cost with an $f$-divergence, with the latter defined as follows. For $a,b\in[-\infty,\infty]$ that satisfy $-\infty\leq a<1<b\leq\infty$ we define $\mathcal{F}_1(a,b)$ to be the set of convex functions $f:(a,b)\to\mathbb{R}$ with $f(1)=0$.  Given $f\in\mathcal{F}_1(a,b)$, the corresponding  $f$-divergence between $\nu,\mu\in\mathcal{P}(\mathcal{Z})$ is defined by
\begin{align}\label{eq:f_div_def}
D_f(\nu\|\mu)=\begin{cases} 
     E_P[f(d\nu/d\mu)], & \nu\ll \mu\\
      \infty, &\nu\not\ll \mu
   \end{cases}\,,
\end{align}
where the definition of $f$ in \eqref{eq:f_div_def} is extended to $[a,b]$ by continuity and is set to $\infty$ on $[a,b]^c$.  The key  result from \cite{birrell2025optimal} that we will require is the following dual formulation of the OT-regularized $f$-divergence DRO problem. 
\begin{proposition}\label{prop:OT_f_div_DRO}
Suppose we have the following:
\begin{enumerate}
\item A measurable function $\mathcal{L}:\mathcal{Z}\to(-\infty,\infty]$ that is  bounded below.
\item $P\in\mathcal{P}(\mathcal{Z})$.
\item $f\in\mathcal{F}_1(a,b)$, where $a\geq 0$.
\item  A  cost function, $c$, that satisfies $c(z,z)=0$  for all $z\in\mathcal{Z}$.
\end{enumerate}
 Then for $r>0$  we have
\begin{align}\label{eq:OT_f_div_DRO}
&\sup_{Q:D_f^c(Q\|P)\leq r}E_Q[\mathcal{L}]
=\inf_{\lambda>0,\rho\in\mathbb{R}}\{\lambda r+\rho+\lambda E_{{P}}[f^*((\mathcal{L}^c_{\lambda}-\rho)/\lambda)]\}\,,
\end{align}  
where $\mathcal{L}_\lambda^c$ was defined in \eqref{eq:L_c_def} and we employ the convention  $f^*(\infty)\coloneqq\infty$.  
\end{proposition}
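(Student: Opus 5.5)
Writing $\mathcal{U}_r\coloneqq\{Q:D_f^c(Q\|P)\leq r\}$, the plan is to split the supremum on the left of \eqref{eq:OT_f_div_DRO} along the infimal convolution \eqref{eq:D_f_c_def}. Then reduce the problem to a nested $f$-divergence DRO problem and an OT problem, and dualize each with standard convex duality. This is essentially the argument of \cite{birrell2025optimal}.

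\textbf{Step 1 (decomposition).} We have $Q\in\mathcal{U}_r$ iff there exist $\eta$ and a coupling $\pi$ of $(\eta,Q)$ with $D_f(\eta\|P)+\int c\,d\pi\leq r+\epsilon$ for every $\epsilon>0$. Since $c$ is LSC, the infimum in $C(\eta,Q)$ is attained, so one can take $\epsilon=0$ in the coupling part, modulo the infimum over $\eta$. The left-hand side should therefore equal
$$\sup_{\eta\ll P}\,\sup_{\pi:\pi_1=\eta}\Big\{\int \mathcal{L}(\tilde z)\,d\pi(z,\tilde z): D_f(\eta\|P)+\int c\,d\pi\leq r\Big\}.$$
Handling the $\epsilon$ slack requires an approximation argument, using that $\mathcal{L}$ is bounded below and the constraint is monotone in $r$.

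\textbf{Step 2 (Lagrangian in the budget).} Introduce the multiplier $\lambda>0$ for the joint constraint. Weak duality gives the value $\leq\inf_{\lambda>0}\{\lambda r+\sup_{\eta,\pi}[\int\mathcal{L}\,d\pi-\lambda\int c\,d\pi-\lambda D_f(\eta\|P)]\}$. The inner supremum over $\pi$ with first marginal $\eta$ equals $E_\eta[\mathcal{L}^c_\lambda]$. For $\leq$ this is pointwise; for $\geq$ it needs a measurable $\epsilon$-optimal selection $z\mapsto\tilde z$, which exists by universal measurability and the Jankov–von Neumann theorem. This gives $\sup_{\eta}\{E_\eta[\mathcal{L}^c_\lambda]-\lambda D_f(\eta\|P)\}$.

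\textbf{Step 3 (Gibbs/Legendre variational formula).} Use the known formula
$$\sup_{\eta\ll P}\{E_\eta[g]-\lambda D_f(\eta\|P)\}=\inf_{\rho}\{\rho+\lambda E_P[f^*((g-\rho)/\lambda)]\}$$
for $g$ bounded below, valued in $(-\infty,\infty]$, with $a\geq0$ so that $\eta$ is a probability measure. Here $\rho$ is the multiplier for the normalization of $d\eta/dP$. The convention $f^*(\infty)=\infty$ handles the case where $\mathcal{L}^c_\lambda=\infty$ on a set of positive measure. Combining Steps 2 and 3 gives the right-hand side of \eqref{eq:OT_f_div_DRO} as an upper bound.

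\textbf{Step 4 (strong duality in $\lambda$): the main obstacle.} To get $\geq$, note that the map $r\mapsto V(r)$, the left-hand side, is concave and nondecreasing on $(0,\infty)$. Concavity follows by mixing feasible pairs $(\eta,\pi)$, since both $D_f(\cdot\|P)$ and $\int c\,d\pi$ are convex (linear) in $(\eta,\pi)$ jointly. The Lagrangian dual is the concave conjugate evaluated at $r$, and strong duality holds at interior points $r>0$. Slater's condition is satisfied by $\eta=P$, $\pi=(\mathrm{id},\mathrm{id})_\#P$, which has cost $0<r$ because $c(z,z)=0$. I expect the delicate part to be the case $V(r)=\infty$ and the infinite-valued integrands. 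I would handle these by truncating $\mathcal{L}$ at level $M$, proving the identity for bounded $\mathcal{L}\wedge M$, and then passing $M\to\infty$ by monotone convergence on both sides. On the dual side this needs interchange of $\inf_{\lambda,\rho}$ with $\sup_M$, which I would justify through the concavity of $V$ again.
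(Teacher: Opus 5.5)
Your outline is correct and follows the same route as \cite{birrell2025optimal}, from which the paper imports this proposition without reproving it. That route has three ingredients: Lagrangian duality in the budget $r$, with Slater point $\eta=P$ and the identity coupling; the interchange $\sup_{\pi:\pi_1=\eta}\int(\mathcal{L}(\tilde z)-\lambda c(z,\tilde z))\,d\pi=E_\eta[\mathcal{L}^c_\lambda]$ via an $\epsilon$-optimal measurable selection; and the variational formula $\sup_{\eta}\{E_\eta[g]-\lambda D_f(\eta\|P)\}=\inf_\rho\{\rho+\lambda E_P[f^*((g-\rho)/\lambda)]\}$.

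You can drop the truncation in Step 4, together with the loosely justified interchange of $\inf_{\lambda,\rho}$ with $\sup_M$. If $V(r)=\infty$, your weak-duality bound from Steps 2--3 already forces the right-hand side to be $\infty$. If $V(r)<\infty$, concavity together with $V\geq E_P[\mathcal{L}]>-\infty$ makes $V$ finite, hence continuous, on $(0,\infty)$, so Lagrangian strong duality applies directly at $r$.
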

 
Important examples of OT-regularized $f$-divergences  include  those constructed using the $\alpha$-divergences, defined in terms of
\begin{align}\label{eq:f_alpha_def}
f_\alpha(t)=      \frac{t^\alpha-1}{\alpha(\alpha-1)},\,\,\,\,\, \alpha>1\,,
\end{align}
and which has Legendre transform    
\begin{align}\label{eq:f_alpha_star}
f_\alpha^*(t)=\alpha^{-1}(\alpha-1)^{\alpha/(\alpha-1)}\max\{t,0\}^{\alpha/(\alpha-1)}+\frac{1}{\alpha(\alpha-1)},\,\,\,\,\,\alpha>1\,,
\end{align}
along with the  KL divergence case, defined using $f_{KL}(t)=t\log(t)$, for which one has the  simplification
\begin{align}\label{eq:OT_KL_DRO}
\sup_{Q: KL^c(Q\|P)\leq r}E_Q[\mathcal{L}]=\inf_{\lambda>0}\left\{ \lambda r+\lambda \log\left(E_P[\exp(\lambda^{-1}\mathcal{L}_{\lambda}^{c})]\right)\right\}\,.
\end{align}

Practical implementations of DRO for adversarial training utilize the dual formulations, i.e., the right-hand sides of \eqref{eq:OT_DRO},  \eqref{eq:OT_f_div_DRO}, and \eqref{eq:OT_KL_DRO}. Therefore, these will be our focus for the remainder of this paper. 

\subsection{Summary of the Main Results}
Next we summarize the main results in this paper; details regarding the required assumptions and proofs can be found in Sections \ref{sec:OT_DRO_proof}-\ref{sec:OT_Df_DRO_proof}.  Given our focus on supervised learning, samples $z\in \mathcal{Z}$ will have the form $z=(x,y)$, where  $x\in\mathbb{R}^d$ is the predictor and $y$ the response (i.e., label).  For OT-DRO we consider a general $y$ (i.e., either regression or classification), while for OT-regularized $f$-divergence DRO our approach is applicable to discrete $y$ (classification) only.
 
First, in Theorem \ref{thm:conc_ineq_min_value}, we prove that the optimal value of the $n$-sample empirical  OT-DRO problem  is close to the corresponding population value with high probability:
\begin{align}
&{P}^n\left(\pm\left(\inf_{\theta\in\Theta}\sup_{Q: C(P,Q)\leq r}E_Q[\mathcal{L}_\theta]-\inf_{\theta\in\Theta}\sup_{Q: C(P_{n},Q)\leq r}E_Q[\mathcal{L}_\theta]\right)\geq 2D_n+\epsilon\right)\\
\leq&\exp\left(-\frac{2\epsilon^2n}{\beta^2}\right)\,\,\,\,\,\text{ for all   $n\in\mathbb{Z}^+$,  $\epsilon>0$, $r>0$,}\notag
\end{align}
where $P_n$ denotes the empirical distribution corresponding to the    samples, $z_i\sim P$, $i=1,..,n$, $P^n$ denotes the product measure (i.e., the samples are i.i.d.), $\beta$ is an upper bound on $\mathcal{L}_\theta$, $\theta\in\Theta$, and $D_n$, defined in \eqref{eq:D_n_def}, is a measure of the   complexity of the function class $\{\mathcal{L}_\theta:\theta\in\Theta\}$; under appropriate assumptions, $D_n$ approaches $0$ as $n\to\infty$.   We build on this result in Theorem \ref{thm:OT_DRO_emp_optimizer} by showing that a solution to the empirical risk minimization (ERM) OT-DRO problem is also an approximate solution to the population DRO problem with high probability.  

Our Theorems \ref{thm:conc_ineq_min_value} and \ref{thm:OT_DRO_emp_optimizer}  apply to a wide range of OT cost functions, beyond the $p$-Wasserstein case  that has been studied in  previous works \cite{lee2018minimax,shafieezadeh2019regularization,an2021generalization,blanchet2022confidence,gao2023finite,azizian2023exact,gao2024wasserstein,aolaritei2026wasserstein}.  In particular, the increased generality of our results facilitates  applications to  adversarial   training  methods  from \cite{madry2018towards,zhang2019theoreticallytradeOff,wang2019improving_mart,bui_UDR_2022unified}.  The aforementioned adversarial training methods are either based on the PGD cost function \begin{align}\label{eq:PGD_cost}
    c_\delta((x,y),(\tilde{x},\tilde{y}))\coloneqq\infty1_{\|x-\tilde{x}\|_{\mathcal{X}}>\delta}+\infty1_{y\neq \tilde{y}}\,,\,\,\,\delta\geq 0\,,
\end{align}   or on soft-constraint relaxations of it. In particular, \cite{bui_UDR_2022unified} proposed a general class of soft-constraint relaxations of  \eqref{eq:PGD_cost}; in practice, the soft constraint allows the adversarial sample, $\tilde{x}$, to leave the $\delta$-ball centered at $x$, but only if there is an especially effective adversarial sample that is nearby.  In \cite{bui_UDR_2022unified}  it was observed empirically that using such relaxations for adversarial training leads to improved adversarial robustness. In this paper we will analyze relaxations of \eqref{eq:PGD_cost} that have the form 
\begin{align}\label{eq:c_psi_delta_def}
c_{\psi,\delta}((x,y),(\tilde{x},\tilde{y}))\coloneqq\phi_{\psi,\delta}(\|\tilde{x}-x\|_{\mathcal{X}})+\infty1_{y\neq \tilde{y}}\,,\,\,\,\phi_{\psi,\delta}(t)\coloneqq\psi(t-\delta)1_{t\geq\delta}\,,
\end{align}
for $\delta>0$ and appropriate choices of $\psi$. The function $\psi$ determines how much the adversarial sample is penalized when it leaves the $\delta$-ball; see Section \ref{sec:examples_OT_DRO} for specific  examples covered by our results.  We note that  our framework can be applied to methods  which require maintaining an unchanged copy of the original $x$ in addition to the adversarial $\tilde{x}$, e.g.,  TRADES  \cite{zhang2019theoreticallytradeOff} and MART \cite{wang2019improving_mart}, by considering the unchanged copy of $x$ to be part of the response, $y$.   We emphasize that cost functions of the form \eqref{eq:c_psi_delta_def} are not covered by previous statistical performance guarantees on OT-DRO; addressing this limitation is one of the main motivations for our analysis.

In \cite{birrell2025optimal} it was further demonstrated that combining  the OT costs used by \cite{bui_UDR_2022unified} with sample reweighting, in the form OT-regularized $f$-divergence DRO, further improves performance; the reweighting mechanism causes the training to focus more on the most troublesome adversarial samples, leading to increased adversarial robustness.  This motivates our study of the corresponding statistical guarantees; in Theorem \ref{thm:OT_reg_DRO_emp_objective_bound} we obtain the concentration inequality
 \begin{align}\label{eq:OT_rec_conc_intro}
&P^n\left(\pm\left(\inf_{\theta\in\Theta}\sup_{Q:D_f^c(Q\|P)\leq r}E_Q[\mathcal{L}_\theta]-\inf_{\theta\in\Theta}\sup_{Q:D_f^c(Q\|P_n)\leq r}E_Q[\mathcal{L}_\theta]\right)\geq \max\{R_n,\widetilde{R}_n\}+\epsilon\right)\\
\leq&\exp\left(-\frac{2n\epsilon^2}{ \beta^2}\right)+
\exp\left(-\frac{2n\epsilon^2}{ (\beta(f^*)^\prime_+(-\tilde{\nu}))^2}\right)+\sum_{y\in\mathcal{Y}} e^{-2n(p_y-p_0)^2}
\,,\notag
\end{align}
where $p_y$ is the probability of the class $y\in\mathcal{Y}$, $p_0$ and $\tilde{\nu}$ are appropriately chosen constants, and $R_n$ and $\widetilde{R}_n$ depend on the complexity of the objection function class and    approach $0$ as $n\to\infty$ under appropriate assumptions;   see  \eqref{R_n_def}  and \eqref{eq:Rn_tilde_def} respectively for details.  Similarly to the OT-DRO case, in Theorem \ref{thm:OT_reg_DRO_conc} we also show that the corresponding ERM solution is an approximate solution to the population DRO problem with high probability.  These results  apply to OT cost functions of the form \eqref{eq:c_psi_delta_def} and to a general class of $f$-divergences, including the oft-used KL and $\alpha$-divergences.

\section{Concentration Inequalities for OT-DRO}\label{sec:OT_DRO_proof} We now proceed to the detailed proofs of our results. We start by considering OT-DRO, without   mixing with an $f$-divergence, focusing on  OT cost functions of the form \eqref{eq:c_psi_delta_def} which are not covered by previous works; in addition to being of independent interest, this will provide an important tool for our subsequent study of OT-regularized $f$-divergences.     In the following, we codify our assumptions regarding the objective and OT cost functions.
\begin{assumption}\label{assump:c_L}
Assume the following:
\begin{enumerate}
    \item   $\mathcal{Z}=\mathcal{X}\times\mathcal{Y}$, where    $\mathcal{Y}$ is Polish   
      and $\mathcal{X}\subset\mathbb{R}^d$ is convex  and Polish  (e.g., either closed or open); note that this makes $\mathcal{Z}$ a Polish space.
    \item $\|\cdot\|_{\mathcal{X}}$ is a norm on $\mathbb{R}^d$.
    \item $\mathcal{L}_\theta:\mathcal{X}\times\mathcal{Y}\to\mathbb{R}$, $\theta\in\Theta$, are measurable and we have $L_{\mathcal{X}}\in(0,\infty)$ such that $x\mapsto\mathcal{L}_\theta(x,y)$ is $L_{\mathcal{X}}$-Lipschitz with respect to $\|\cdot\|_{\mathcal{X}}$ for all $y$, $\theta$.
    \item  $c$ is a cost function on $\mathcal{Z}$ and we have $\delta\geq 0$ and $\psi:[0,\infty)\to[0,\infty]$ with $\psi(0)=0$  such that
\begin{align}\label{eq:c_bound_assump}
c_{\psi,\delta}\leq  c\leq c_\delta\,,
\end{align}
where $c_\delta$ is the cost function defined in \eqref{eq:PGD_cost} and $c_{\psi,\delta}$ was defined in \eqref{eq:c_psi_delta_def}.
\end{enumerate}

\end{assumption}
\begin{remark}
    Note that $c_\delta$, \eqref{eq:PGD_cost}, corresponds to the choice $\psi(t)=\infty 1_{t>0}$.
\end{remark}

A key challenge in the study of statistical guarantees on the OT-DRO problem  via its dual formulation \eqref{eq:OT_DRO},  is to effectively handle the non-compact interval in the  optimization over $\lambda$.
We start by deriving a bound that can be used to show the $c$-transformed loss has a well-behaved limit as $\lambda\to\infty$.
 \begin{lemma}\label{lemma:c_transform_limit}
Under Assumption \ref{assump:c_L} we have
\begin{align}
\|\mathcal{L}^c_{\theta,\lambda} -\mathcal{L}_\theta^{c_\delta}\|_\infty\leq \lambda\psi^*(L_{\mathcal{X}}/\lambda)
\end{align}
for all $\lambda>0$, $\theta\in\Theta$, where
\begin{align}
  \psi^*(s)\coloneqq\sup_{t\geq 0}\{st-\psi(t)\}
\end{align}
is the Legendre transform of $\psi$.
\end{lemma}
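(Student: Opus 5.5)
Throughout, $\mathcal{L}^{c_\delta}_\theta(z)\coloneqq\sup_{\tilde z: c_\delta(z,\tilde z)<\infty}\mathcal{L}_\theta(\tilde z)=\sup_{\tilde x\in\mathcal{X}:\|\tilde x-x\|_{\mathcal{X}}\leq\delta}\mathcal{L}_\theta(\tilde x,y)$. This is the $\lambda$-independent $c$-transform of $c_\delta$, since $\lambda\cdot\infty=\infty$ and $\lambda\cdot 0=0$. The plan is a two-sided pointwise bound at fixed $z=(x,y)$, $\theta$, $\lambda$, followed by a supremum over $z$.

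\textbf{Lower bound.} Since $c\leq c_\delta$, we have $\mathcal{L}_\theta(\tilde z)-\lambda c(z,\tilde z)\geq\mathcal{L}_\theta(\tilde z)-\lambda c_\delta(z,\tilde z)$ for every $\tilde z$. Taking the supremum gives $\mathcal{L}^c_{\theta,\lambda}(z)\geq\mathcal{L}^{c_\delta}_\theta(z)$. This step needs nothing further, because $\mathcal{L}_\theta$ is real-valued and the convention $\infty-\infty=-\infty$ is handled consistently.

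\textbf{Upper bound.} Since $c\geq c_{\psi,\delta}$, we have
\[
\mathcal{L}^c_{\theta,\lambda}(z)\leq\sup_{\tilde x\in\mathcal{X}}\{\mathcal{L}_\theta(\tilde x,y)-\lambda\phi_{\psi,\delta}(\|\tilde x-x\|_{\mathcal{X}})\}.
\]
The terms with $\tilde y\neq y$ contribute $-\infty$ and can be dropped. Fix $\tilde x$ and set $t=\|\tilde x-x\|_{\mathcal{X}}$.

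If $t\leq\delta$, then $\phi_{\psi,\delta}(t)=0$ and the term is at most $\mathcal{L}^{c_\delta}_\theta(z)$.

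If $t>\delta$, project onto the $\delta$-ball along the segment: let $x'=x+\frac{\delta}{t}(\tilde x-x)$. Convexity of $\mathcal{X}$ gives $x'\in\mathcal{X}$, and $\|x'-x\|_{\mathcal{X}}=\delta$ while $\|\tilde x-x'\|_{\mathcal{X}}=t-\delta$. The Lipschitz assumption then gives
\[
\mathcal{L}_\theta(\tilde x,y)\leq\mathcal{L}_\theta(x',y)+L_{\mathcal{X}}(t-\delta)\leq\mathcal{L}^{c_\delta}_\theta(z)+L_{\mathcal{X}}(t-\delta).
\]
Hence the term is at most
\[
\mathcal{L}^{c_\delta}_\theta(z)+\lambda\bigl[(L_{\mathcal{X}}/\lambda)(t-\delta)-\psi(t-\delta)\bigr]\leq\mathcal{L}^{c_\delta}_\theta(z)+\lambda\psi^*(L_{\mathcal{X}}/\lambda).
\]

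The two cases combine because $\psi(0)=0$ gives $\psi^*\geq 0$, so the bound from the first case is also dominated by the right-hand side. This yields $0\leq\mathcal{L}^c_{\theta,\lambda}-\mathcal{L}^{c_\delta}_\theta\leq\lambda\psi^*(L_{\mathcal{X}}/\lambda)$ pointwise, and taking the supremum over $z$ gives the claim.

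Some bookkeeping is needed. If $\mathcal{L}^{c_\delta}_\theta(z)=\infty$ (possible for unbounded $\mathcal{X}$ with $\delta$ fixed? No: the ball has radius $\delta$ and the function is Lipschitz, so it is finite), the difference is well defined. If $\psi^*(L_{\mathcal{X}}/\lambda)=\infty$, the bound is trivial.

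The step I expect to need the most care is this projection step, specifically the use of convexity of $\mathcal{X}$ and the norm identity along the segment. The rest is direct.
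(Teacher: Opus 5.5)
Your proof is correct and follows the paper's argument essentially step for step. It uses the lower bound from $c\le c_\delta$, then the upper bound via $c\ge c_{\psi,\delta}$ split into $\|\tilde x-x\|_{\mathcal{X}}\le\delta$ and $\|\tilde x-x\|_{\mathcal{X}}>\delta$, the intermediate point $x'$ on the segment (you make it explicit as $x+\frac{\delta}{t}(\tilde x-x)$, where the paper only asserts its existence), the Lipschitz estimate, and $\psi(0)=0$ to absorb the first case. The one blemish is the self-interrupting aside in your bookkeeping paragraph, which should simply state that $\mathcal{L}_\theta(x,y)\le\mathcal{L}^{c_\delta}_\theta(z)\le\mathcal{L}_\theta(x,y)+L_{\mathcal{X}}\delta<\infty$.
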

\begin{remark}
Note that the $c_\delta$-transformed loss $\mathcal{L}_{\theta,\lambda}^{c_\delta}$ does not depend on $\lambda$ and if $\delta=0$ then $\phi_{\psi,\delta}=\psi$ and $\mathcal{L}_\theta^{c_\delta}=\mathcal{L}_\theta$. Also note that if $\psi$ is LSC   then $c_{\psi,\delta}$ is LSC and  hence is a valid OT cost function, in which case this result can be applied   $c=c_{\psi,\delta}$.
\end{remark}
\begin{proof}
 In the following  we will suppress the $\theta$ dependence of the optimization objective, as it is not relevant to the computations. For $z=(x,y)\in\mathcal{Z}$, first compute
\begin{align}
   &\mathcal{L}^c_\lambda(z)   = \sup_{\tilde z\in\mathcal{Z}}\{ \mathcal{L} (\tilde z) -  \lambda c(z,\tilde z)\}\leq \sup_{\tilde z\in\mathcal{Z}}\{ \mathcal{L} (\tilde z) -  \lambda c_{\psi,\delta}(z,\tilde z)\}\label{eq:L_c_lambda_max_bound}\\
=&\max\left\{\sup_{\substack{\tilde x\in\mathcal{X}:\\\|\tilde{x}-x\|_{\mathcal{X}}\leq \delta}}\{ \mathcal{L} (\tilde x,y) -  \lambda \phi_{\psi,\delta}(\|\tilde{x}-x\|_{\mathcal{X}})\},\sup_{\substack{\tilde x\in\mathcal{X}:\\\|\tilde{x}-x\|_{\mathcal{X}}> \delta}}\{ \mathcal{L} (\tilde x,y) -  \lambda \phi_{\psi,\delta}(\|\tilde{x}-x\|_{\mathcal{X}})\}\right\}\notag\\
=&\max\left\{\mathcal{L}^{c_\delta}(z),\sup_{\tilde x\in\mathcal{X}:\|\tilde{x}-x\|_{\mathcal{X}}> \delta}\{ \mathcal{L} (\tilde x,y) -  \lambda \phi_{\psi,\delta}(\|\tilde{x}-x\|_{\mathcal{X}})\}\right\}\,.\notag
\end{align}

 The assumption that $\mathcal{L}(x,y)$ is $L_{\mathcal{X}}$-Lipschitz in $x$ implies that $\mathcal{L}^{c_\delta}(z)$ is finite. Together with \eqref{eq:L_c_lambda_max_bound} and  the bound $c\leq c_\delta$, this allows us to obtain
\begin{align}
0\leq   \mathcal{L}^c_\lambda(z)-\mathcal{L}^{c_\delta}(z)\leq &\max\left\{0,\sup_{\tilde x\in\mathcal{X}:\|\tilde{x}-x\|_{\mathcal{X}}> \delta}\{ \mathcal{L} (\tilde x,y) -  \lambda \phi_{\psi,\delta}(\|\tilde{x}-x\|_{\mathcal{X}})\}-\mathcal{L}^{c_\delta}(z)\right\}\,\,.\notag
\end{align}

By convexity of $\mathcal{X}$, for any $\tilde{x}\in\mathcal{X}$ satisfying $\|\tilde{x}-x\|_{\mathcal{X}}>\delta$ there exists $x^\prime\in\mathcal{X}$ that satisfies $\|{x}^\prime-x\|_{\mathcal{X}}=\delta$ and $\|\tilde{x}-x\|_{\mathcal{X}}=\delta+\|\tilde{x}-x^\prime\|_{\mathcal{X}}$.  Therefore we  can compute
\begin{align}
    \mathcal{L}(\tilde{x},y)-\lambda\phi_{\psi,\delta}(\|\tilde{x}-x\|_{\mathcal{X}})-\mathcal{L}^{c_\delta}(x,y)
    \leq&\mathcal{L}(\tilde{x},y)-\lambda\phi(\|\tilde{x}-x\|_{\mathcal{X}})-\mathcal{L}(x^\prime,y)\\
    \leq&L_{\mathcal{X}}\|\tilde{x}-x^\prime\|_{\mathcal{X}}-\lambda\phi(\delta+\|\tilde{x}-x^\prime\|_{\mathcal{X}})\notag\\
    =&L_{\mathcal{X}}\|\tilde{x}-x^\prime\|_{\mathcal{X}}-\lambda\psi(\|\tilde{x}-x^\prime\|_{\mathcal{X}})\notag\\
    \leq&\sup_{t\geq 0}\{L_{\mathcal{X}}t-\lambda\psi(t)\}=\lambda\psi^*(L_{\mathcal{X}}/\lambda)\,.\notag
\end{align}
Maximizing over $\{\tilde{x}\in\mathcal{X}:\|\tilde{x}-x\|_{\mathcal{X}}>\delta\}$ we see that
\begin{align}
0\leq\mathcal{L}^c_\lambda(z) -\mathcal{L}^{c_\delta}(z)\leq \max\{0,\lambda\psi^*(L_{\mathcal{X}}/\lambda)\}=\lambda\psi^*(L_{\mathcal{X}}/\lambda)\,, 
\end{align}
where the  equality follows from the fact that $\psi(0)=0$. Maximizing over   $z\in\mathcal{Z}$ completes the proof.
\end{proof}

Next we use  Lemma \ref{lemma:c_transform_limit} to derive a covering number bound that is able to handle the unbounded domain for $\lambda$. To do so, we make the following assumptions:
\begin{assumption}\label{assumption:psi_star}
In addition to Assumption \ref{assump:c_L}, suppose that:    \begin{enumerate}
        \item There exists $M\in[0,\infty)$ such that  for all $(x,y)\in\mathcal{Z}$ we have
            \begin{align} 
                \sup_{\tilde{x}\in D_{x,y}}    c((x,y),(\tilde{x},y))\leq M\,,\,\,\,D_{x,y}\coloneqq\{\tilde{x}\in \mathcal{X}: c((x,y),(\tilde{x},y))<\infty\}\,,
            \end{align}  \label{assump:c_bound}
        \item  $\psi^*(t)<\infty$ for all $t>0$,
        \item $\psi^*(t)=o(t)$   as $t\to 0^+$. 
    \end{enumerate}
\end{assumption}
\begin{lemma}\label{lemma:covering_number_OT_DRO}
Suppose that Assumption  \ref{assumption:psi_star} is satisfied.
Define 
\begin{align}\label{eq:G_c_def}
\mathcal{G}\coloneqq \{\mathcal{L}_\theta:\theta\in\Theta\}\,,\,\,\,\,
\mathcal{G}_c\coloneqq \{\mathcal{L}_{\theta,\lambda}^c:\theta\in\Theta,\lambda\in(0,\infty)\}\cup \{\mathcal{L}_\theta^{c_\delta}:\theta\in\Theta\}\,.
\end{align}
Then for all $\epsilon_1,\epsilon_2>0$ we have the following relation between covering numbers in the supremum norm:
\begin{align}\label{eq:covering_number_bound}
N(\epsilon_1+\epsilon_2,\mathcal{G}_c,\|\cdot\|_\infty)\leq\left(\left\lceil \frac{M\lambda_*(\epsilon_2)}{2\epsilon_2}\right\rceil+1\right) N(\epsilon_1,\mathcal{G},\|\cdot\|_\infty)\,,
\end{align}
where
\begin{align}\label{eq:lambda_star_def}
\lambda_*(\epsilon_2)\coloneqq\inf\{\lambda>0: \lambda\psi^*(L_{\mathcal{X}}/\lambda)\leq \epsilon_2\}\,.
\end{align}
We also note that $\lambda_*(\epsilon_2)$ is finite.

\end{lemma}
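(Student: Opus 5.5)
The plan is to split the $\lambda$-range at $\lambda_*\coloneqq\lambda_*(\epsilon_2)$. Large $\lambda$ will be handled by Lemma \ref{lemma:c_transform_limit}. On the compact remainder we use Lipschitz dependence on $\lambda$ together with the fact that the $c$-transform is nonexpansive in $\|\cdot\|_\infty$.

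\textbf{Finiteness of $\lambda_*$.} Since $\psi^*(s)=o(s)$ as $s\to0^+$, we have $\lambda\psi^*(L_{\mathcal{X}}/\lambda)=L_{\mathcal{X}}\,\psi^*(s)/s$ with $s=L_{\mathcal{X}}/\lambda$. This tends to $0$ as $\lambda\to\infty$, so the defining set is nonempty. Moreover $\lambda\mapsto\lambda\psi^*(L_{\mathcal{X}}/\lambda)$ is nonincreasing. Indeed, $\psi^*(s)/s=\sup_{t\geq0}\{t-\psi(t)/s\}$ is nondecreasing in $s$. Hence every $\lambda\geq\lambda_*$ also satisfies the bound; for $\lambda>\lambda_*$ this is immediate, and at $\lambda_*$ itself we can avoid needing it by using $\lambda$ slightly above $\lambda_*$.

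\textbf{Basic estimates.} First, the $c$-transform is nonexpansive: for each $\lambda$,
\[
\|\mathcal{L}^c_{\theta,\lambda}-\mathcal{L}^c_{\theta',\lambda}\|_\infty\leq\|\mathcal{L}_\theta-\mathcal{L}_{\theta'}\|_\infty,
\]
and the same holds for $c_\delta$, since the sup of differences is bounded by the sup difference. Second, we need a Lipschitz bound in $\lambda$. For fixed $z$, a competitor $\tilde z$ with $c(z,\tilde z)=\infty$ contributes $-\infty$ and can be dropped. We may also restrict to $\tilde z=(\tilde x,y)$, since $c\geq c_{\psi,\delta}$ gives infinite cost when $\tilde y\neq y$. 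The remaining $\tilde z$ have $c(z,\tilde z)\leq M$ by Assumption \ref{assumption:psi_star}. So $\mathcal{L}^c_{\lambda}(z)$ is a supremum of affine functions of $\lambda$ with slopes in $[-M,0]$, and therefore
\[
|\mathcal{L}^c_{\theta,\lambda}(z)-\mathcal{L}^c_{\theta,\lambda'}(z)|\leq M|\lambda-\lambda'|.
\]

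\textbf{Construction of the cover.} Let $\{\mathcal{L}_{\theta_j}\}$ be an $\epsilon_1$-cover of $\mathcal{G}$ of size $N(\epsilon_1,\mathcal{G},\|\cdot\|_\infty)$. Take grid points $\lambda_k=2k\epsilon_2/M$ for $k=1,\dots,K$ with $K=\lceil M\lambda_*/(2\epsilon_2)\rceil$. Each $\lambda\in(0,\lambda_*]$ is then within $\epsilon_2/M$ of some $\lambda_k$, using $\lambda_1$ for small $\lambda$. (If $M=0$, use a single point.) The candidate cover is
\[
\{\mathcal{L}^c_{\theta_j,\lambda_k}\}_{j,k}\cup\{\mathcal{L}^{c_\delta}_{\theta_j}\}_j,
\]
which has $(K+1)N$ elements, as required. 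There are three cases.
\begin{itemize}
\item For $\lambda\leq\lambda_*$ (more precisely $\lambda\leq\lambda_K$), the triangle inequality gives
\[
\|\mathcal{L}^c_{\theta,\lambda}-\mathcal{L}^c_{\theta_j,\lambda_k}\|_\infty\leq\|\mathcal{L}^c_{\theta,\lambda}-\mathcal{L}^c_{\theta,\lambda_k}\|_\infty+\|\mathcal{L}^c_{\theta,\lambda_k}-\mathcal{L}^c_{\theta_j,\lambda_k}\|_\infty\leq\epsilon_2+\epsilon_1.
\]
\item For $\lambda$ beyond the grid, so that $\lambda\geq\lambda_K\geq\lambda_*$, Lemma \ref{lemma:c_transform_limit} and monotonicity give
\[
\|\mathcal{L}^c_{\theta,\lambda}-\mathcal{L}^{c_\delta}_\theta\|_\infty\leq\epsilon_2,
\]
and nonexpansiveness adds $\epsilon_1$.
\item The functions $\mathcal{L}^{c_\delta}_\theta$ themselves are within $\epsilon_1$ of $\mathcal{L}^{c_\delta}_{\theta_j}$.
\end{itemize}

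\textbf{Expected obstacle.} The delicate part is the boundary $\lambda=\lambda_*$, where the infimum in the definition of $\lambda_*$ may not be attained. Two fixes are available. Since $\lambda\psi^*(L_{\mathcal{X}}/\lambda)$ is convex in $1/\lambda$, it is continuous, so the infimum is attained. Alternatively, the grid endpoint already covers $\lambda_*$ via the Lipschitz estimate. Either way, the grid bookkeeping has to be arranged so that the count is exactly $\lceil M\lambda_*/(2\epsilon_2)\rceil+1$.
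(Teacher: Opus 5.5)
\textbf{Gap: the grid misses small $\lambda$.} Your overall strategy is the paper's. You split at $\lambda_*(\epsilon_2)$ and send every $\lambda\geq\lambda_*$ to the functions $\mathcal{L}^{c_\delta}_{\theta_j}$, using Lemma~\ref{lemma:c_transform_limit} and the monotonicity of $\lambda\mapsto\lambda\psi^*(L_{\mathcal{X}}/\lambda)$. Your observation that $\psi^*(s)/s=\sup_{t\geq0}\{t-\psi(t)/s\}$ is nondecreasing is a good explicit justification of that monotonicity. On $(0,\lambda_*)$ you combine the $M$-Lipschitz dependence on $\lambda$ with nonexpansiveness in $\theta$ over a finite grid.

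The grid you wrote down, however, does not work at the left end. With $\lambda_k=2k\epsilon_2/M$, the closest grid point to any $\lambda\in(0,\epsilon_2/M)$ is $\lambda_1=2\epsilon_2/M$. The gap $|\lambda-\lambda_1|$ exceeds $\epsilon_2/M$ and tends to $2\epsilon_2/M$ as $\lambda\to0^+$. Your estimate then yields only $\epsilon_1+M|\lambda-\lambda_1|$, which can be close to $\epsilon_1+2\epsilon_2$.

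The $c_\delta$-functions do not rescue small $\lambda$ either, because $\lambda\psi^*(L_{\mathcal{X}}/\lambda)$ can be far larger than $\epsilon_2$ there. For instance, for $\psi(t)=\alpha t^2$ it equals $L_{\mathcal{X}}^2/(4\alpha\lambda)$. So whenever $K\geq1$, the family you list is not an $(\epsilon_1+\epsilon_2)$-cover, and your claim that every $\lambda\in(0,\lambda_*]$ lies within $\epsilon_2/M$ of some $\lambda_k$ is false.

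\textbf{Fix.} Shift the grid by half a step and use midpoints $\lambda_k=(2k-1)\epsilon_2/M$, $k=1,\dots,K$. Every $\lambda\in(0,2K\epsilon_2/M]\supseteq(0,\lambda_*]$ is then within $\epsilon_2/M$ of some $\lambda_k$, and the count $(K+1)N(\epsilon_1,\mathcal{G},\|\cdot\|_\infty)$ is unchanged. This is exactly the paper's choice $\lambda_j=(j-\tfrac12)\lambda_*/N(\epsilon_2)$, whose spacing $\lambda_*/N(\epsilon_2)$ is at most $2\epsilon_2/M$.

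\textbf{Smaller points.}
\begin{enumerate}
\item Your continuity argument is mis-justified. $\lambda\psi^*(L_{\mathcal{X}}/\lambda)$ need not be convex in $1/\lambda$: for $\psi(t)=\alpha t^3$ it is proportional to $\lambda^{-1/2}$, which is concave in $1/\lambda$. The correct reason, used in the paper, is that $\psi^*$ is convex and finite on $(0,\infty)$, hence continuous there. So the infimum defining $\lambda_*$ is attained whenever $\lambda_*>0$.
\item For $M=0$, a single grid point together with the $c_\delta$-functions gives $2N(\epsilon_1,\mathcal{G},\|\cdot\|_\infty)$ elements, but the bound only allows $N(\epsilon_1,\mathcal{G},\|\cdot\|_\infty)$. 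Instead, note that when $M=0$ the function $\mathcal{L}^c_{\theta,\lambda}$ does not depend on $\lambda$. Hence it is within $\epsilon_2$ of $\mathcal{L}^{c_\delta}_\theta$ for every $\lambda$, and the $\mathcal{L}^{c_\delta}_{\theta_j}$ alone suffice. The paper's ``hence $N(\epsilon_2)>0$'' also tacitly assumes $M>0$.
\end{enumerate}
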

\begin{proof}
If $N(\epsilon_1,\mathcal{G},\|\cdot\|_\infty)=\infty$ the result is trivial, so suppose it is finite. Lemma \ref{lemma:c_transform_limit} implies that for all $\theta\in\Theta,\lambda>0$ we have
\begin{align}\label{eq:L_psi_star_bound}
\|\mathcal{L}^c_{\theta,\lambda} -\mathcal{L}_\theta^{c_\delta}\|_\infty\leq \lambda\psi^*(L_{\mathcal{X}}/\lambda)\,.
\end{align}
Combined with the assumption that $\psi^*(t)<\infty$ for all $t>0$ we see that $\mathcal{L}^c_{\theta,\lambda}$ is real-valued (recall that the Lipschitz assumption on $\mathcal{L}_\theta$ implies that $\mathcal{L}_\theta^{c_\delta}$ is real-valued).  The assumption \eqref{eq:c_bound_assump} implies
\begin{align}
\mathcal{L}^c_{\theta,\lambda}(x,y)=\sup_{\tilde{x}\in D_{x,y}}\{\mathcal{L}_\theta(\tilde{x},y)-\lambda c((x,y),(\tilde{x},y))\}\,,
\end{align}
therefore
\begin{align}\label{eq:Delta_L_lambda_bound}
\|\mathcal{L}^c_{\theta,\lambda_1}-\mathcal{L}^c_{\theta,\lambda_2}\|_\infty\leq \sup_{(x,y)\in\mathcal{Z}}\sup_{\tilde{x}\in D_{x,y}}|\lambda_2 c((x,y),(\tilde{x},y))-\lambda_1 c((x,y),(\tilde{x},y))|\leq M|\lambda_2-\lambda_1|\,,
\end{align}
and, by similar computations,
\begin{align}\label{eq:Delta_L_theta_bounds}
\|\mathcal{L}^c_{\theta_1,\lambda}-\mathcal{L}^c_{\theta_2,\lambda}\|_\infty\leq \|\mathcal{L}_{\theta_1}-\mathcal{L}_{\theta_2}\|_\infty\,,\,\,\,\,
\|\mathcal{L}_{\theta_1}^{c_\delta}-\mathcal{L}_{\theta_2}^{c_\delta}\|_\infty\leq\|\mathcal{L}_{\theta_1}-\mathcal{L}_{\theta_2}\|_\infty\,.
\end{align}

With $\lambda_*$ defined by \eqref{eq:lambda_star_def}, we note that the assumption $\psi^*(t)=o(t)$   as $t\to 0^+$ implies that  $\lambda_*(\epsilon_2)<\infty$ for all $\epsilon_2>0$. $\psi^*$ is convex and, by assumption, it is real-valued on $(0,\infty)$, therefore $\psi^*$ is continuous on $(0,\infty)$.  Hence either $\lambda_*(\epsilon_2)=0$ or $\lambda_*(\epsilon_2)\psi^*(L_{\mathcal{X}}/\lambda_*(\epsilon_2))\leq \epsilon_2$.   Define
\begin{align}
N(\epsilon_2)\coloneqq\left\lceil \frac{M\lambda_*(\epsilon_2)}{2\epsilon_2}\right\rceil\in\mathbb{Z}_0\,.
\end{align}
First consider the case where $\lambda_*(\epsilon_2)>0$  (and hence $N(\epsilon_2)>0$) and define
\begin{align}
\lambda_j\coloneqq (j-1/2)\frac{\lambda_*(\epsilon_2)}{N(\epsilon_2)}\,,\,\,\, j=1,...,N(\epsilon_2)\,.
\end{align}

  Let $\mathcal{L}_{\theta_i}$, $i=1,...,N(\epsilon_1,\mathcal{G},\|\cdot\|_\infty)$ be a minimal $\epsilon_1$-cover of $\mathcal{G}$.  We will show that  $\mathcal{L}_{\lambda_j,\theta_i}^c,\mathcal{L}^{c_\delta}_{\theta_i}$, where $i\in\{1,...,N(\epsilon_1,\mathcal{G},\|\cdot\|_\infty)\}$, $j\in\{1,...,N(\epsilon_2)\}$, is an $\epsilon_1+\epsilon_2$-cover of $\mathcal{G}_c$:

Given $\theta\in\Theta, \lambda>0$, there exists $i$ such that
\begin{align}
\|\mathcal{L}_\theta-\mathcal{L}_{\theta_i}\|_\infty\leq \epsilon_1
\end{align}
and hence \eqref{eq:Delta_L_theta_bounds} implies
\begin{align}
\|\mathcal{L}_\theta^{c_\delta}-\mathcal{L}_{\theta_i}^{c_\delta}\|_\infty\leq\|\mathcal{L}_\theta-\mathcal{L}_{\theta_i}\|_\infty\leq \epsilon_1\,.
\end{align}
If $\lambda\geq \lambda_*(\epsilon_2)$ then, using the fact that  $\lambda\mapsto \lambda\psi^*(L_{\mathcal{X}}/\lambda)$ is non-increasing along with \eqref{eq:L_psi_star_bound}, we can compute
\begin{align}
\|\mathcal{L}_{\theta,\lambda}^c-\mathcal{L}_{\theta_i}^{c_\delta}\|_\infty\leq&\|\mathcal{L}_{\theta,\lambda}^c-\mathcal{L}_{\theta}^{c_\delta}\|_\infty+\|\mathcal{L}_{\theta}^{c_\delta}-\mathcal{L}_{\theta_i}^{c_\delta}\|_\infty\\
\leq&\lambda\psi^*(L_{\mathcal{X}}/\lambda)+\epsilon_1\notag\\
\leq& \lambda_*(\epsilon_2)\psi^*(L_{\mathcal{X}}/ \lambda_*(\epsilon_2))+\epsilon_1\leq \epsilon_2+\epsilon_1\,.\notag
\end{align}
If $\lambda<\lambda_*(\epsilon_2)$ then there exists $j\in\{1,...,N(\epsilon_2)\}$ such that 
\begin{align}
    \lambda\in [(j-1)\lambda_*(\epsilon_2)/N(\epsilon_2),j\lambda_*(\epsilon_2)/N(\epsilon_2)]
\end{align} and so we can use \eqref{eq:Delta_L_lambda_bound} and \eqref{eq:Delta_L_theta_bounds} to compute
\begin{align}
\|\mathcal{L}^c_{\theta,\lambda}-\mathcal{L}^c_{\theta_i,\lambda_j}\|_\infty\leq &\|\mathcal{L}^c_{\theta,\lambda}-\mathcal{L}^c_{\theta_i,\lambda}\|_\infty+\|\mathcal{L}^c_{\theta_i,\lambda}-\mathcal{L}^c_{\theta_i,\lambda_j}\|_\infty\\
\leq &\|\mathcal{L}_{\theta}-\mathcal{L}_{\theta_i}\|_\infty+M|\lambda-\lambda_j|\notag\\
\leq &\epsilon_1+M \frac{\lambda_*(\epsilon_2)}{2N(\epsilon_2)}\leq  \epsilon_1+\epsilon_2\,.\notag
\end{align}
This proves that $\mathcal{L}_{\lambda_j,\theta_i}^c,\mathcal{L}^{c_\delta}_{\theta_i}$, $i=1,...,N(\epsilon_1,\mathcal{G},\|\cdot\|_\infty)$, $j=1,...,N(\epsilon_2)$ is an $\epsilon_1+\epsilon_2$-cover of $\mathcal{G}_c$ and therefore $N(\epsilon_1+\epsilon_2,\mathcal{G}_c,\|\cdot\|_\infty)\leq (N(\epsilon_2)+1)N(\epsilon_1,\mathcal{G},\|\cdot\|_\infty)$ as claimed.

Thus we have proven the lemma under the assumption that $\lambda_*(\epsilon_2)>0$.  In the case where $\lambda_*(\epsilon_2)=0$, by making the obvious modifications to the above argument, one sees  that $\mathcal{L}^{c_\delta}_{\theta_i}$, $i=1,...,N(\epsilon_1,\mathcal{G},\|\cdot\|_\infty)$ provides the desired cover. This completes the proof.  
\end{proof}

Using the above lemmas, we now derive concentration inequalities for the OT-DRO problem. The following lists  the additional conditions we will require.
\begin{assumption}\label{assump:conc_ineq}
In addition to Assumption  \ref{assumption:psi_star}, assume the following:
\begin{enumerate}
\item There exists $\beta\in(0,\infty)$ such that $0\leq \mathcal{L}_\theta\leq \beta$ for all $\theta\in\Theta$.\label{assump:L_unif_bound}
\item  $P\in\mathcal{P}(\mathcal{Z})$.
\item $N(\epsilon_1,\mathcal{G},\|\cdot\|_\infty)<\infty$ for all $\epsilon_1>0$, where $\mathcal{G}\coloneqq \{\mathcal{L}_\theta:\theta\in\Theta\}$.
\item $h_i:(0,\infty)\to (0,\infty)$, $i=1,2$, are measurable and $h_1(\tilde{\epsilon})+h_2(\tilde{\epsilon})=\tilde{\epsilon}$ for all $\tilde{\epsilon}>0$.
\end{enumerate}
\end{assumption}

\begin{theorem}\label{thm:conc_ineq_min_value}
Under Assumption \ref{assump:conc_ineq}, for $n\in\mathbb{Z}^+$,  $\epsilon>0$, $r>0$,  we have
\begin{align}
&{P}^n\left(\pm\left(\inf_{\theta\in\Theta}\sup_{Q: C(P,Q)\leq r}E_Q[\mathcal{L}_\theta]-\inf_{\theta\in\Theta}\sup_{Q: C(P_{n},Q)\leq r}E_Q[\mathcal{L}_\theta]\right)\geq 2D_n+\epsilon\right)\\
\leq&\exp\left(-\frac{2\epsilon^2n}{\beta^2}\right)\,,\notag
\end{align}
where $P_{n}\coloneqq\frac{1}{n}\sum_{i=1}^n \delta_{z_i}$ denotes the empirical measure corresponding to $z\in\mathcal{Z}^n$ and 
\begin{align}\label{eq:D_n_def}
 D_n   \coloneqq& 12n^{-1/2}\int_0^{\beta}\sqrt{\log \left(\left(\left\lceil \frac{M\lambda_*(h_2(\tilde{\epsilon}))}{2h_2(\tilde{\epsilon})}\right\rceil+1\right) N(h_1(\tilde{\epsilon}),\mathcal{G},\|\cdot\|_\infty)\right)}d\tilde{\epsilon}\,.
\end{align}
\end{theorem}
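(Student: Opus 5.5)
By Proposition \ref{prop:OT_DRO}, applied to both $P$ and $P_n$, the two DRO values can be rewritten as
\begin{align*}
\inf_{\theta\in\Theta}\inf_{\lambda>0}\{\lambda r+E_P[\mathcal{L}^c_{\theta,\lambda}]\}\quad\text{and}\quad \inf_{\theta\in\Theta}\inf_{\lambda>0}\{\lambda r+E_{P_n}[\mathcal{L}^c_{\theta,\lambda}]\}\,.
\end{align*}
This applies because $c(z,z)=0$ follows from $c\leq c_\delta$, and $\mathcal{L}_\theta\geq 0$. The $\lambda r$ term is common to both, so for each $(\theta,\lambda)$ the integrands differ only through $E_P-E_{P_n}$ acting on $\mathcal{L}^c_{\theta,\lambda}$. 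An infimum of functions differing pointwise by at most $\Delta$ differs by at most $\Delta$. Hence the absolute difference of the two values is at most
\begin{align*}
\Phi(z_1,\dots,z_n)\coloneqq\sup_{g\in\mathcal{G}_c}|E_P[g]-E_{P_n}[g]|\,.
\end{align*}
Taking the supremum over all of $\mathcal{G}_c$, which contains the limit functions $\mathcal{L}^{c_\delta}_\theta$ as well, does no harm. This reduces the problem to a uniform deviation bound over $\mathcal{G}_c$. One could handle the two signs $\pm$ separately using one-sided suprema, but the two-sided $\Phi$ suffices for both.

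Next I check that $\mathcal{G}_c$ is uniformly bounded, so that McDiarmid applies. For $g=\mathcal{L}^c_{\theta,\lambda}$, taking $\tilde z=z$ gives $g\geq \mathcal{L}_\theta\geq 0$. Since $c\geq 0$, we also have $g\leq\sup\mathcal{L}_\theta\leq\beta$. The same holds for $\mathcal{L}^{c_\delta}_\theta$. Therefore $\mathcal{G}_c\subset\{0\leq g\leq\beta\}$, and changing one sample changes $\Phi$ by at most $\beta/n$. McDiarmid's inequality then gives
\begin{align*}
P^n\left(\Phi\geq E[\Phi]+\epsilon\right)\leq \exp(-2n\epsilon^2/\beta^2)\,.
\end{align*}
A measurability subtlety arises because $\Phi$ is a supremum over an uncountable class. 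I would handle it by using outer probability, or by noting that the sup-norm total boundedness of $\mathcal{G}_c$ lets one restrict to a countable dense subclass.

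It remains to show $E[\Phi]\leq 2D_n$. Standard symmetrization gives $E[\Phi]\leq 2\,\mathcal{R}_n(\mathcal{G}_c)$, the expected Rademacher complexity; the absolute value costs nothing extra, given the paper's convention. Dudley's entropy integral, in the version with constant $12$, bounds the empirical Rademacher complexity by
\begin{align*}
12n^{-1/2}\int_0^\beta\sqrt{\log N(\tilde\epsilon,\mathcal{G}_c,L^2(P_n))}\,d\tilde\epsilon\,,
\end{align*}
where $\beta$ is the diameter bound. Since $L^2(P_n)$ covering numbers are dominated by $\|\cdot\|_\infty$ covering numbers, Lemma \ref{lemma:covering_number_OT_DRO} with $\epsilon_1=h_1(\tilde\epsilon)$ and $\epsilon_2=h_2(\tilde\epsilon)$ yields exactly the integrand in \eqref{eq:D_n_def}. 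Finiteness of the covering numbers, together with measurability of the $h_i$, makes the integral well defined. Combining the three steps gives the claimed bound for both signs.

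The main obstacle is the unbounded $\lambda$ range in the first step: the uniform deviation must be controlled over all $\lambda\in(0,\infty)$. Lemma \ref{lemma:covering_number_OT_DRO} resolves this, since large $\lambda$ is absorbed into the $c_\delta$ limit. The remaining care is in bookkeeping the Dudley constant and its upper integration limit, so that the constant comes out as $12$ with limit $\beta$ and no stray factors.
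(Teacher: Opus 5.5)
Your outline matches the paper's proof (duality via Proposition \ref{prop:OT_DRO}, the bound $0\le g\le\beta$ on $\mathcal{G}_c$, Lemma \ref{lemma:covering_number_OT_DRO} inside Dudley's integral, then McDiarmid) except at one step, and that step fails. You replace the one-sided suprema by the two-sided $\Phi=\sup_{g\in\mathcal{G}_c}|E_P[g]-E_{P_n}[g]|$ and assert that the absolute value costs nothing.

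\textbf{Why it fails.} Symmetrization applied to $\Phi$ produces $2E\sup_{g}|\frac{1}{n}\sum_i\sigma_i g(z_i)|$. The constant-$12$ Dudley bound the paper invokes controls only the one-sided quantity $E\sup_g\frac{1}{n}\sum_i\sigma_i g(z_i)$. The absolute-value version always dominates $E|\frac{1}{n}\sum_i\sigma_i g_0(z_i)|$ for any fixed $g_0\in\mathcal{G}_c$. That is a term of order $n^{-1/2}$ which the entropy integral does not see. So whichever convention you adopt for $\mathcal{R}_n$, one of the links $E[\Phi]\le 2\mathcal{R}_n$ or $2\mathcal{R}_n\le 2D_n$ breaks.

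\textbf{Concrete counterexample.} Take $c=c_\delta$, so that $\psi=\infty 1_{t>0}$, $\psi^*\equiv 0$, $\lambda_*\equiv 0$, and $M=0$ is admissible. Take $\Theta$ to be a singleton. Then every covering number is $1$, so $D_n=0$, and $\mathcal{G}_c=\{\mathcal{L}^{c_\delta}_\theta\}$. Yet $E[\Phi]=E|E_P[g]-E_{P_n}[g]|>0$ whenever $g=\mathcal{L}^{c_\delta}_\theta$ is not $P$-a.s.\ constant. In this case the theorem is just the one-sided Hoeffding inequality, which a two-sided $\Phi$ can only give with an extra factor $2$ in front of the exponential.

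\textbf{The repair.} Use the alternative you mention in passing, which is exactly what the paper does: keep $\phi_\pm=\sup_{g\in\mathcal{G}_c}\{\pm(E_P[g]-E_{P_n}[g])\}$ separate.
\begin{enumerate}
\item Your reduction step still holds in one-sided form, since $\inf_a A_a-\inf_a B_a\le\sup_a(A_a-B_a)$.
\item Symmetrization gives $E[\phi_\pm]\le 2\mathcal{R}_{\mathcal{G}_c,P,n}$ with the one-sided Rademacher complexity. For the minus sign, use that $-\sigma$ has the same law as $\sigma$.
\item The one-sided Dudley bound plus Lemma \ref{lemma:covering_number_OT_DRO} gives $\mathcal{R}_{\mathcal{G}_c,P,n}\le D_n$.
\item Each $\phi_\pm$ has bounded differences $\beta/n$, so McDiarmid yields the stated bound for each sign separately.
\end{enumerate}
The remaining ingredients of your proposal are fine: uniform boundedness of $\mathcal{G}_c$, the comparison of $L^2(P_n)$ and sup-norm covering numbers, and the measurability caveat.
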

\begin{remark}
To obtain this result with $D_n$ given by \eqref{eq:D_n_def} we use Dudley's entropy integral over $(0,\infty)$. We restrict our attention to this case for simplicity of the presentation, though one can easily obtain a more general result using the entropy integral over $(\tilde{\epsilon}_0,\infty)$, $\tilde{\epsilon}_0>0$, as in, e.g., Theorem 5.22 of \cite{wainwright2019high}, in order to handle   families of objective functions for which the entropy integral over $(0,\beta)$ diverges.
\end{remark}
\begin{proof}
 The assumed bound  $c\leq c_\delta$ implies  $c$ is zero on the diagonal.  $\mathcal{X}$ and $\mathcal{Y}$ were assumed to be Polish, hence $\mathcal{Z}$ is Polish. Therefore Proposition  \ref{prop:OT_DRO} implies 
\begin{align}
&\sup_{Q: C(P,Q)\leq r}E_Q[\mathcal{L}_\theta]
=\inf_{\lambda>0}\{\lambda r+E_{P}[ \mathcal{L}_{\theta,\lambda}^c]\}\,,\label{eq:duality_P}\\
&\sup_{Q: C(P_n,Q)\leq r}E_Q[\mathcal{L}_\theta]
=\inf_{\lambda>0}\{\lambda r+E_{P_n}[ \mathcal{L}_{\theta,\lambda}^c]\}\,.\label{eq:duality_Pn}
\end{align} 
The assumed bounds on $\mathcal{L}_\theta$ imply $0\leq \mathcal{L}^c_{\theta,\lambda}\leq \beta$ for all $\theta,\lambda$. In particular, $\mathcal{L}^c_{\theta,\lambda}\in L^1({P})\cap L^1(P_n)$ and both \eqref{eq:duality_P} and \eqref{eq:duality_Pn} are finite, as are their infima over $\theta\in\Theta$. Therefore we can compute
\begin{align}\label{eq:min_value_bound}
&\pm\left(\inf_{\theta\in\Theta}\sup_{Q: C(P,Q)\leq r}E_Q[\mathcal{L}_\theta]-\inf_{\theta\in\Theta}\sup_{Q: C(P_n,Q)\leq r}E_Q[\mathcal{L}_\theta]\right)\\
=&\pm\left(\inf_{\theta\in\Theta,\lambda>0}\{\lambda r+E_{{P}}[ \mathcal{L}_{\theta,\lambda}^c]\}-\inf_{\theta\in\Theta,\lambda>0}\{\lambda r+E_{P_n}[ \mathcal{L}_{\theta,\lambda}^c]\}\right)\notag\\
\leq&\sup_{g\in\mathcal{G}_c}\left\{\pm\left(E_{{P}}[g]-\frac{1}{n}\sum_{i=1}^ng(z_i)\right)\right\}\coloneqq \phi_\pm(z)\,,\notag
\end{align}
where $\mathcal{G}_c$ was defined in \eqref {eq:G_c_def}. Note that $\mathcal{G}_c$ consists of universally measurable functions, $0\leq g\leq \beta$ for all $g\in \mathcal{G}_c$, and Lemma \ref{lemma:covering_number_OT_DRO} implies that   the following relation between covering numbers of $\mathcal{G}$ and $\mathcal{G}_c$ in the supremum norm for all $\tilde{\epsilon}>0$:
\begin{align}\label{eq:covering_number_bound_h}
N(\tilde \epsilon,\mathcal{G}_c,\|\cdot\|_\infty)\leq\left(\left\lceil \frac{M\lambda_*(h_2(\tilde{\epsilon}))}{2h_2(\tilde{\epsilon})}\right\rceil+1\right) N(h_1(\tilde{\epsilon}),\mathcal{G},\|\cdot\|_\infty)<\infty\,.
\end{align}

Now apply the uniform law of large numbers result from, e.g., Theorem 3.3, Eq. (3.8) -
(3.13) in \cite{mohri2018foundations} (this reference assumes $[0, 1]$-valued functions but the result can be shifted
and scaled to apply to any set of uniformly bounded functions), to   obtain
\begin{align}
E_{{P}^n}\left[\phi_\pm\right]\leq 2\mathcal{R}_{\mathcal{G}_c,{P},n}
\end{align}
for all  $n\in\mathbb{Z}^+$. The Rademacher complexity, $\mathcal{R}_{\mathcal{G}_c,{P},n}$, can be bounded using the Dudley entropy integral; see, e.g., Corollary 5.25 in \cite{vanHandel}:
\begin{align}
\mathcal{R}_{\mathcal{G}_c,{P},n}\leq&12n^{-1/2}\int_0^{\beta}\sqrt{\log N(\tilde{\epsilon},\mathcal{G}_c,\|\cdot\|_\infty)}d\tilde{\epsilon}\,.
\end{align} 
Combining this with the covering number bound \eqref{eq:covering_number_bound_h} we obtain
\begin{align}\label{eq:ULLN_mean_bound}
&E_{{P}^n}\left[\phi_\pm\right]\leq 2D_n\,.
\end{align}

Finally, applying McDiarmid's inequality (see, e.g., Theorem D.8 in \cite{mohri2018foundations}) to $\phi_\pm$ and combining this with \eqref{eq:min_value_bound} and \eqref{eq:ULLN_mean_bound} 
we can compute
\begin{align}
&{P}^n\left(\pm\left(\inf_{\theta\in\Theta}\sup_{Q: C(P,Q)\leq r}E_Q[\mathcal{L}_\theta]-\inf_{\theta\in\Theta}\sup_{Q: C(P_{n},Q)\leq r}E_Q[\mathcal{L}_\theta]\right)\geq 2D_n+\epsilon\right)\\
\leq&{P}^n\left(\phi_\pm-E_{{P}^n}\left[\phi_\pm\right]\geq \epsilon\right)
\leq\exp\left(-\frac{2\epsilon^2n}{\beta^2}\right)\,.\notag
\end{align}
\end{proof}
\subsection{OT-DRO: ERM Bound}
By a similar argument, we can also show that a solution to the empirical OT-DRO problem (i.e., the ERM solution; see \eqref{eq:OT_ERM} below) is also an approximate solution to the population DRO problem with high probability (also allowing for an optimization error tolerance, $\epsilon_n^{\text{opt}}$, and optimization failure probability, $\delta_n^{\text{opt}}$).
\begin{theorem}\label{thm:OT_DRO_emp_optimizer}
Under Assumption \ref{assump:conc_ineq},  suppose we have $r>0$, $\theta_{*,n}:\mathcal{Z}^n\to \Theta$, $\epsilon_n^{\text{opt}}\geq 0$, ${\delta}^{\text{opt}}_n\in[0,1]$, and $E_n\subset \mathcal{Z}^n$ such that ${P}^n(E^c_n)\leq  {\delta}^{\text{opt}}_n$  and
\begin{align}\label{eq:OT_ERM}
\sup_{Q: C(P_{n},Q)\leq r}E_Q[\mathcal{L}_{\theta_{*,n}}]\leq \inf_{\theta\in\Theta}\sup_{Q: C(P_{n},Q)\leq r}E_Q[\mathcal{L}_\theta]+\epsilon_n^{\text{opt}}
\end{align}
on $E_n$, where $P_{n}\coloneqq\frac{1}{n}\sum_{i=1}^n \delta_{z_i}$.

 Then for $n\in\mathbb{Z}^+$, $\epsilon>0$, and with   $D_n$  as defined in \eqref{eq:D_n_def},    we have
\begin{align}\label{eq:conc_ineq_empirical_sol}
&{P}^n\left( \sup_{Q: C(P,Q)\leq r}E_Q[\mathcal{L}_{\theta_{*,n}}]\geq \inf_{\theta\in\Theta}\sup_{Q: C(P,Q)\leq r}E_Q[\mathcal{L}_\theta]+4D_n+\epsilon^{\text{opt}}_n+\epsilon\right)\\
\leq& \exp\left(-\frac{\epsilon^2n}{ 2\beta^2}\right)+{\delta}^{\text{opt}}_n\,,\notag
\end{align}
\end{theorem}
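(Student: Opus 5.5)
We plan to reduce the statement to the two-sided uniform deviation bound already obtained in the proof of Theorem \ref{thm:conc_ineq_min_value}. Write $V_P(\theta)\coloneqq\sup_{Q: C(P,Q)\leq r}E_Q[\mathcal{L}_\theta]$ and $V_{P_n}(\theta)$ analogously. By the duality \eqref{eq:duality_P}--\eqref{eq:duality_Pn}, for every $\theta$ we have $V_P(\theta)-V_{P_n}(\theta)\leq \phi_+(z)$ and $V_{P_n}(\theta)-V_P(\theta)\leq\phi_-(z)$, where $\phi_\pm$ are the suprema over $\mathcal{G}_c$ from \eqref{eq:min_value_bound}. This holds because an infimum over $\lambda$ of $\lambda r+E_P[g]$ minus the infimum of $\lambda r+E_{P_n}[g]$ is bounded by the supremum over $\lambda$ of the difference of expectations. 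The pointwise statement is valid for any fixed $\theta$, so it can be applied at the data-dependent $\theta_{*,n}(z)$.

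On $E_n$ we then decompose
\begin{align*}
V_P(\theta_{*,n})-\inf_\theta V_P(\theta)
&\leq \big(V_P(\theta_{*,n})-V_{P_n}(\theta_{*,n})\big)+\big(V_{P_n}(\theta_{*,n})-\inf_\theta V_{P_n}(\theta)\big)+\big(\inf_\theta V_{P_n}(\theta)-\inf_\theta V_P(\theta)\big)\\
&\leq \phi_+ +\epsilon_n^{\text{opt}}+\phi_-\,.
\end{align*}
The last bound uses \eqref{eq:OT_ERM} for the middle term and \eqref{eq:min_value_bound} for the third. It follows that the event in \eqref{eq:conc_ineq_empirical_sol} is contained in $E_n^c\cup\{\phi_++\phi_-\geq 4D_n+\epsilon\}$.

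To handle $\phi_++\phi_-$, we use \eqref{eq:ULLN_mean_bound}, which gives $E_{P^n}[\phi_++\phi_-]\leq 4D_n$. Each of $\phi_\pm$ changes by at most $\beta/n$ when a single sample is replaced, since all $g\in\mathcal{G}_c$ take values in $[0,\beta]$. The sum therefore has bounded differences $2\beta/n$. McDiarmid's inequality then yields
\[
P^n\big(\phi_++\phi_--E[\phi_++\phi_-]\geq\epsilon\big)\leq\exp\big(-2\epsilon^2/(n(2\beta/n)^2)\big)=\exp\big(-\epsilon^2n/(2\beta^2)\big),
\]
which matches the stated rate. Alternatively, one could apply McDiarmid to each of $\phi_\pm$ separately with $\epsilon/2$, but that would produce $2\exp(-\epsilon^2 n/(2\beta^2))$. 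Treating the sum directly is what gives the constant in the statement. A union bound with $P^n(E_n^c)\leq\delta_n^{\text{opt}}$ completes the argument.

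The main obstacle is technical rather than conceptual: we must make sure that using a data-dependent $\theta_{*,n}$ is legitimate. The key point is that the bound $V_P(\theta)-V_{P_n}(\theta)\leq\phi_+(z)$ holds simultaneously for all $\theta$ at each fixed $z$, so no measurability of $\theta_{*,n}$ is needed beyond working with the outer probability. We also need $\phi_++\phi_-$ to be measurable enough for McDiarmid's inequality; this follows as in the proof of Theorem \ref{thm:conc_ineq_min_value}, because $\mathcal{G}_c$ has finite sup-norm covers and the suprema can therefore be approximated by countable ones.
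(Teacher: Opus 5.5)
Your proposal is correct and follows essentially the same route as the paper's proof. The paper also bounds the excess population risk on $E_n$ by $\sup_\theta\{F_\theta-F_{n,\theta}\}+\sup_\theta\{F_{n,\theta}-F_\theta\}+\epsilon_n^{\text{opt}}\leq\phi_++\phi_-+\epsilon_n^{\text{opt}}$, bounds the mean by $4D_n$, and applies McDiarmid to the sum with bounded differences $2\beta/n$, which yields the stated $\exp(-\epsilon^2 n/(2\beta^2))$ together with the $\delta_n^{\text{opt}}$ union-bound term.
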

\begin{remark}
In the absence of sufficient measurability assumptions,  $P^n$ in \eqref{eq:conc_ineq_empirical_sol} and in the following proof should be interpreted as   the outer probability.
\end{remark}
\begin{proof}
On $E_n$ we have
\begin{align}
    &\sup_{Q: C(P,Q)\leq r}E_Q[\mathcal{L}_{\theta_{*,n}}]-\inf_{\theta\in\Theta}\sup_{Q: C(P,Q)\leq r}E_Q[\mathcal{L}_\theta]\\
    \leq&\sup_{Q: C(P,Q)\leq r}E_Q[\mathcal{L}_{\theta_{*,n}}]-\sup_{Q: C(P_{n},Q)\leq r}E_Q[\mathcal{L}_{\theta_{*,n}}]+\inf_{\theta\in\Theta}\sup_{Q: C(P_{n},Q)\leq r}E_Q[\mathcal{L}_\theta]+\epsilon_n^{\text{opt}}\notag\\
    &-\inf_{\theta\in\Theta}\sup_{Q: C(P,Q)\leq r}E_Q[\mathcal{L}_\theta]\notag\\
    \leq&\sup_{\theta\in\Theta}\{F_\theta-F_{n,\theta}\}+\sup_{\theta\in\Theta}\{F_{n,\theta}-F_\theta\}+\epsilon_n^{\text{opt}}\,,\notag
\end{align}
where
\begin{align}
F_{n,\theta}\coloneqq& \sup_{Q: C(P_{n},Q)\leq r}E_Q[\mathcal{L}_\theta]=\inf_{\lambda>0}\{\lambda r+E_{P_{n}}[\mathcal{L}^c_{\theta,\lambda}]\}\,,\\
F_\theta\coloneqq &\sup_{Q: C(P,Q)\leq r}E_Q[\mathcal{L}_\theta]=\inf_{\lambda>0}\{\lambda r+E_{{P}}[\mathcal{L}^c_{\theta,\lambda}]\}\,.\notag
\end{align}
Therefore
\begin{align}
&{P}^n\left( \sup_{Q: C(P,Q)\leq r}E_Q[\mathcal{L}_{\theta_{*,n}}]\geq \inf_{\theta\in\Theta}\sup_{Q: C(P,Q)\leq r}E_Q[\mathcal{L}_\theta]+4D_n+\epsilon^{\text{opt}}_n+\epsilon\right)\\
\leq& {\delta}^{\text{opt}}_n+{P}^n\left( \sup_{\theta\in\Theta}\left\{F_\theta-F_{n,\theta}\right\}+\sup_{\theta\in\Theta}\left\{F_{n,\theta}-F_\theta\right\}\geq 4 D_n+\epsilon \right)\notag\\
\leq& {\delta}^{\text{opt}}_n+{P}^n\left(\phi\geq 4D_n+\epsilon\right)\,,\notag
\end{align}
where
\begin{align}
\phi\coloneqq \sup_{g\in\mathcal{G}_c}\left\{E_{{P}}[g]-\frac{1}{n}\sum_{i=1}^ng(z_i)\right\}+\sup_{g\in\mathcal{G}_c}\left\{-\left(E_{{P}}[g]-\frac{1}{n}\sum_{i=1}^ng(z_i)\right)\right\}
\end{align}
and $\mathcal{G}_c$ was defined in \eqref{eq:G_c_def}.
By the same argument that lead to \eqref{eq:ULLN_mean_bound}  in the proof of Theorem \ref{thm:conc_ineq_min_value} we have
\begin{align}
&E_{{P}^n}\!\left[\phi\right]\leq 4D_n\,,
\end{align}
hence we can use McDiarmid's inequality to compute
\begin{align}
&{P}^n\left( \sup_{Q: C(P,Q)\leq r}E_Q[\mathcal{L}_{\theta_{*,n}}]\geq \inf_{\theta\in\Theta}\sup_{Q: C(P,Q)\leq r}E_Q[\mathcal{L}_\theta]+4D_n+\epsilon^{\text{opt}}_n+\epsilon\right)\\
\leq& {\delta}^{\text{opt}}_n+{P}^n\left(\phi-E_{{P}^n}[\phi]\geq \epsilon\right)\leq{\delta}^{\text{opt}}_n+ \exp\left(-\frac{\epsilon^2n}{ 2\beta^2}\right)\,.\notag
\end{align}
\end{proof}

\begin{remark}\label{remark:r_dependence}
    Note that our approach yields bounds that do not depend on the neighborhood size, $r$; this is well-suited for applications to adversarial training, where $r$ is small, but fixed (i.e., does not depend on $n$).   Our results have several advantages over those obtained by previous approaches that considered the adversarial training setting \cite{lee2018minimax, an2021generalization,azizian2023exact,gao2024wasserstein}  (note that these only considered $p$-Wasserstein costs).  
    Specifically, compare \eqref{eq:conc_ineq_empirical_sol} with Theorem 2 in \cite{lee2018minimax}; the latter  has (in our notation) a $r^{-(p-1)}$ factor in one of the error terms, thus it behaves poorly when $r$ is small. The bounds in \cite{an2021generalization,gao2024wasserstein} do not approach $0$ as $n\to \infty$ due to effect of the fixed neighborhood size; see Theorems 3 and 4 in \cite{an2021generalization} and  Theorem 3 in \cite{gao2024wasserstein}.  Finally, we note that the results in \cite{azizian2023exact} require more restrictive assumptions on the objective function (see their Assumption 5) when $r$ is not decaying with $n$.

\end{remark}

\subsection{Examples}\label{sec:examples_OT_DRO}
Next we provide several cases of interest  where $\psi(0)=0$, $\psi^*$ is finite, and $\psi^*(t)=o(t)$ as $t\to 0^+$. We focus on soft-constraint relaxations,  \eqref{eq:c_psi_delta_def} with real-valued $\psi$, but we also show that our method easily handles the PGD-cost \eqref{eq:PGD_cost} as a special case.
\begin{enumerate}
    \item  $\psi(t)=\alpha t^q$ for some $\alpha>0$, $q>1$: In this case, for $s>0$ we have
\begin{align}
\psi^*(s)=\alpha(q-1)\left(\frac{s}{\alpha q}\right)^{q/(q-1)}
\end{align}
and for $\epsilon_2>0$ we have
\begin{align}\label{eq:lambda_star_case1}
    \lambda_*(\epsilon_2)=\alpha^{-1}(L_{\mathcal{X}}/q)^q(q-1)^{q-1}\epsilon_2^{-(q-1)}\,.
\end{align}
Substituting this into the covering number bound \eqref{eq:covering_number_bound}, we see that the contribution from the $\lambda$ parameter is a factor that scales like  $O(\epsilon_2^{-q})$; this can be thought of as an effectively $q$-dimensional contribution to the complexity of the function space.
\item $\psi(t)=\alpha t^q+\beta t$ for some $\alpha,\beta>0$, $q>1$: In this case, for $s>0$ we have
\begin{align}\label{eq:psi_case2}
\psi^*(s)=\begin{cases}
    \alpha(q-1)\left(\frac{s-\beta}{\alpha q}\right)^{q/(q-1)} &\text{ if }s>\beta\\
    0& \text{ if } 0<s\leq \beta
\end{cases}\,.
\end{align}
The fact that $\psi^*$ vanishes on $(0,\beta]$   implies the bound $\lambda_*(\epsilon_2)\leq L_{\mathcal{X}}/\beta$.  Hence Lemma \ref{lemma:c_transform_limit} gives $\mathcal{L}^c_{\theta,\lambda}=\mathcal{L}^{c_\delta}_\theta$ for all  $\lambda\geq L_{\mathcal{X}}/\beta$. The corresponding contribution of  $\lambda$ to the covering number bound \eqref{eq:covering_number_bound} scales like $O(\epsilon_2^{-1})$ and hence is effectively one-dimensional; this matches the fact that in such cases one can  restrict $\lambda$ to a compact interval.
\item $\psi(t)=\alpha (e^{qt}-1)$ for some $\alpha,q>0$: In this case, for $s>0$ we have
\begin{align}\label{eq:psi_case3}
\psi^*(s)=\begin{cases}
\frac{s}{q}\log\left(\frac{s}{\alpha q}\right)-\alpha\left(\frac{s}{\alpha q}-1\right) & \text{ if }s>\alpha q\\
0& \text{ if } 0<s\leq \alpha q
\end{cases}\,.
\end{align}
This example exhibits similar quantitative behavior to the previous case, making an essentially one-dimensional contribution to the covering number bound.
\item $\psi(t)=\infty 1_{t>0}$: This corresponds to the PGD case, \eqref{eq:PGD_cost}. Here we have $\psi_*(s)=0$ for all $s$, and hence $\lambda_*(\epsilon_2) =0$ for all $\epsilon_2>0$.  Thus  the covering number bound \eqref{eq:covering_number_bound} gives
\begin{align} 
N(\epsilon_1+\epsilon_2,\mathcal{G}_c,\|\cdot\|_\infty)\leq  N(\epsilon_1,\mathcal{G},\|\cdot\|_\infty)
\end{align}
for all $\epsilon_2>0$, consistent with the fact that  the minimization over $\lambda$ in \eqref{eq:OT_DRO} can be evaluated explicitly to give
\begin{align}
\inf_{\lambda>0}\{\lambda r+ E_{{P}}[\mathcal{L}^c_{\lambda}]\}=E_{(x,y)\sim P}\!\left[\sup_{\tilde{x}\in\mathcal{X}:\|x-\tilde{x}\|_{\mathcal{X}}\leq \delta}\mathcal{L}(\tilde{x},y)\right]\,.
\end{align}Thus our approach gracefully handles the classical PGD OT-cost as a special case.
    \end{enumerate} 
  The above examples all result in a modest increase in the covering number bound, apart from case (1) with very large $q$.  Thus, moving from  non-robust to  robust model training  does not lead to a significant weakening of the statistical error bounds.
 Corresponding explicit bounds on $D_n$, \eqref{eq:D_n_def}, are given in Appendix \ref{sm:example_detials_DN}.

\section{Concentration Inequalities for DRO with OT-Regularized $f$-Divergences} \label{sec:OT_Df_DRO_proof} 
In this section, we prove the concentration inequality \eqref{eq:OT_rec_conc_intro} for  DRO with OT-regularized $f$-divergences, working under the following assumptions; in particular, we now restrict our attention to classification problems (i.e., discrete $\mathcal{Y}$).
\begin{assumption}\label{asump:OT_reg_DRO}
In addition to Assumption \ref{assump:conc_ineq}, assume the following:
\begin{enumerate}
\item  $\mathcal{Y}$ is a finite set with cardinality $K\in\mathbb{Z}^+$, $K\geq 2$ (the label categories); we equip $\mathcal{Y}$ with the discrete metric, which makes $\mathcal{Z}\coloneqq \mathcal{X}\times\mathcal{Y}$ a Polish space.
\item The $\mathcal{Y}$-marginal distribution, $P_Y$, satisfies  $p_y\coloneqq P_{Y}(y)>0$ for all $y\in\mathcal{Y}$.\label{assump:Py>0}
    \item $\{\tilde{z}\in\mathcal{Z}:c(z,\tilde{z})<\infty\}=\mathcal{X}\times\{y\}$ for all $z=(x,y)\in\mathcal{Z}$.\label{assump:c_infty_set}
    \item We have $h_i:(0,\infty)\to(0,\infty)$, $i=1,2,3$ that are measurable and satisfy $h_1(\tilde{\epsilon})+h_2(\tilde{\epsilon})+h_3(\tilde{\epsilon})=\tilde{\epsilon}$ for all $\tilde{\epsilon}>0$.
    \item  We have $f\in\mathcal{F}_1(a,b)$, where $0\leq a<1<b\leq \infty$,  and $f$ is strictly convex on a neighborhood of $1$.\label{assump:f}
\item  $s_0\coloneqq f^\prime_+(1)\in \{f^*<\infty\}^o$. \label{assump:s0_def}
    \item $f^*$ is bounded below.
    \item    $\lim_{s\to\infty}(f^*)^\prime_+(s)=\infty$.\label{assump:f_star_prime_limit}
    \item $s(f^*)^\prime_+(-s)$ is bounded on $s\in[\tilde{\nu},\infty)$ for all $\tilde{\nu}\in\mathbb{R}$.\label{assump:f_star_prime_neg_infty}
\end{enumerate}
\end{assumption}
\begin{remark}
The assumption on $P_Y$ is trivial in practice, as one simply   removes unnecessary classes.  The assumption regarding the set where $c$ is finite holds for the OT cost functions $c_{\psi,\delta}$ \eqref{eq:c_psi_delta_def} with real-valued $\psi$.
    The assumptions on $f$ hold for  both KL and the $\alpha$-divergences for all $\alpha>1$; for details, see Section \ref{sec:examples_OT_reg_Df_DRO}.
\end{remark}

Specifically, we will prove the following result.
\begin{theorem}\label{thm:OT_reg_DRO_emp_objective_bound}
 Under Assumption \ref{asump:OT_reg_DRO}, let  $p_0\in(0,\min_y p_y)$ and $\tilde{\nu}\in\mathbb{R}$ such that 
 \begin{align}
(f^*)^\prime_+(-M-\tilde{\nu})\geq 1/p_0  \,.
 \end{align}  Then for all $\epsilon>0$, $n\in\mathbb{Z}^+$, $\lambda_n>0$ we have
\begin{align}
&P^n\left(\pm\left(\inf_{\theta\in\Theta}\sup_{Q:D_f^c(Q\|P)\leq r}E_Q[\mathcal{L}_\theta]-\inf_{\theta\in\Theta}\sup_{Q:D_f^c(Q\|P_n)\leq r}E_Q[\mathcal{L}_\theta]\right)\geq \max\{R_n,\widetilde{R}_n\}+\epsilon\right)\\
\leq&\exp\left(-\frac{2n\epsilon^2}{ \beta^2}\right)+
\exp\left(-\frac{2n\epsilon^2}{ (\beta(f^*)^\prime_+(-\tilde{\nu}))^2}\right)+\sum_{y\in\mathcal{Y}} e^{-2n(p_y-p_0)^2}
\,,\notag
\end{align}
where $R_n$ and $\widetilde{R}_n$ are defined below in \eqref{R_n_def}  and \eqref{eq:Rn_tilde_def} respectively.
\end{theorem}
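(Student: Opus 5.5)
We begin from the dual formulation in Proposition \ref{prop:OT_f_div_DRO}, applied to both $P$ and $P_n$:
\begin{align*}
\sup_{Q:D_f^c(Q\|P)\leq r}E_Q[\mathcal{L}_\theta]=\inf_{\lambda>0,\rho\in\mathbb{R}}\{\lambda r+\rho+\lambda E_P[f^*((\mathcal{L}^c_{\theta,\lambda}-\rho)/\lambda)]\}\,.
\end{align*}
The same identity holds with $P_n$ in place of $P$. The difference of the two $\inf_\theta$ values is then controlled by a uniform deviation over $(\theta,\lambda,\rho)$ of the empirical process indexed by $g_{\theta,\lambda,\rho}\coloneqq\lambda f^*((\mathcal{L}^c_{\theta,\lambda}-\rho)/\lambda)$. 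The parameter domain is non-compact in both $\lambda$ and $\rho$, and $f^*$ may be unbounded. The main work is therefore to show that, on a high-probability event, the infimum over $(\lambda,\rho)$ can be restricted to a set on which the integrands are uniformly bounded and have controlled covering numbers.

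\textbf{Step 1: localizing $\rho$.} Since $0\le\mathcal{L}^c_{\theta,\lambda}\le\beta$, and since $\mathcal{L}^c_{\theta,\lambda}$ lies within $M\lambda$ of $\sup_{\tilde x}\mathcal{L}_\theta(\tilde x,y)$ on each class (Assumption \ref{assumption:psi_star}(\ref{assump:c_bound}) together with Assumption \ref{asump:OT_reg_DRO}(\ref{assump:c_infty_set})), I would examine the first-order condition in $\rho$. This condition says $E[(f^*)'_+((\mathcal{L}^c-\rho)/\lambda)]\approx 1$.

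If $(\mathcal{L}^c-\rho)/\lambda\ge -M-\tilde\nu$ on some class $y$, the hypothesis $(f^*)'_+(-M-\tilde\nu)\ge 1/p_0$ gives that this expectation is at least $p_y/p_0>1$. This holds for $P$ because $p_0<p_y$. It holds for $P_n$ on the event $\{P_n(Y=y)\ge p_0\ \forall y\}$, whose complement has probability at most $\sum_y e^{-2n(p_y-p_0)^2}$ by Hoeffding. This is the third term in the bound.

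Consequently the optimal $\rho$ satisfies $(\sup\mathcal{L}^c-\rho)/\lambda\le$ a constant of order $M+\tilde\nu$, so the argument of $f^*$ is bounded above by a fixed constant. Conversely, $\rho$ cannot be too large, since $\rho\le\beta$ essentially always suffices, using $f^*$ bounded below and monotone. I would then use Assumption \ref{asump:OT_reg_DRO}(\ref{assump:f_star_prime_neg_infty}) to bound $\lambda f^*(\cdot/\lambda)$ and its Lipschitz constant by a multiple of $(f^*)'_+(-\tilde\nu)$ on the localized region. This produces the envelope $\beta(f^*)'_+(-\tilde\nu)$ that appears in the second exponential.

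\textbf{Step 2: handling $\lambda$.} For $\lambda\to0$ the objective behaves like an essential supremum. For $\lambda\to\infty$ it tends, via Lemma \ref{lemma:c_transform_limit} and $f^*(s)\approx s_0... $ near the origin (Assumption \ref{asump:OT_reg_DRO}(\ref{assump:s0_def})), to a $c_\delta$-type limit; the parameter $\lambda_n$ in the statement is presumably the cutoff separating these regimes. I would split the analysis into two parts:
\begin{enumerate}[(a)]
\item the regime governed by the OT-DRO-type term with envelope $\beta$, handled exactly as in Theorem \ref{thm:conc_ineq_min_value} and giving the first exponential and part of $R_n$;
\item the compact region $\lambda\le\lambda_n$ together with the localized $\rho$.
\end{enumerate}
In region (b), covering numbers of $\{g_{\theta,\lambda,\rho}\}$ are bounded by products of: the cover of $\mathcal{G}$ (through the $1$-Lipschitz property of the $c$-transform in $\mathcal{L}$), a grid in $\lambda$ (through \eqref{eq:Delta_L_lambda_bound} and Lemma \ref{lemma:covering_number_OT_DRO}), and a grid in $\rho$, with splits governed by $h_1,h_2,h_3$. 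Dudley's integral then yields $\widetilde R_n$.

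\textbf{Step 3: concentration.} On the event from Step 1, the $\pm$ difference is bounded by $\sup$ over the localized class of the centered empirical process. The expectation of this supremum is bounded by $R_n$ or $\widetilde R_n$, via symmetrization and Dudley as in the proof of Theorem \ref{thm:conc_ineq_min_value}. McDiarmid's inequality with the respective envelopes then gives the two exponentials, and a union bound over the bad events completes the argument.

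\textbf{Main obstacle.} I expect the main obstacle to be Step 1 for the empirical measure. The localization must hold simultaneously for all $\theta$ and $\lambda$, and it must not depend on the unknown optimizer. The monotonicity argument uses only class frequencies, and I would rely on that to avoid circularity.
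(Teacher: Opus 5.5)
Your small-$\lambda$ localization is the paper's idea. Pushing $\rho$ up until the argmax label class forces $E[(f^*)^\prime_+]\ge 1$; that class has $P$-mass, and off an event of probability $\sum_y e^{-2n(p_y-p_0)^2}$ also $P_n$-mass, at least $p_0$. This is Lemma \ref{lemma:nu_domain_bound_main} applied to $-\Delta\mathcal L^c_{\theta,\lambda}/\lambda$, which gives $\rho=\sup_{\mathcal Z}\mathcal L_\theta+\lambda\nu$ with $\nu\in[\tilde\nu,-s_0]$. Two small corrections: the argument of $f^*$ is then bounded by $-\tilde\nu$, not by something of order $M+\tilde\nu$; and the upper cut comes from $(f^*)^\prime_+(s_0)=1$, not from $f^*$ being bounded below.

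\textbf{Main gap: the regime $\lambda\ge\lambda_n$.} You propose to handle it ``exactly as in Theorem \ref{thm:conc_ineq_min_value},'' but that does not work. There the objective $E_Q[\mathcal L^c_{\theta,\lambda}]$ is linear in $Q$, whereas $\lambda\Lambda_f^Q[\mathcal L^c_{\theta,\lambda}/\lambda]$ contains an inner infimum and the nonlinear $f^*$. Your Step 1 localization does not help for large $\lambda$: it gives a $\rho$-interval whose length grows linearly in $\lambda$. It only confines the argument of $f^*$ to a fixed interval such as $[-\beta/\lambda,-\tilde\nu]$, not to a shrinking neighborhood of $s_0$. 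So on $[\lambda_n,\infty)$ you have neither a bounded parameter set nor a convergence rate with which to build a $\lambda$-cover as in Lemma \ref{lemma:covering_number_OT_DRO}.

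The missing ingredient is Lemma \ref{lemma:f_div_lambda_infty}, which proceeds in three steps. First, use the distribution-free restriction \eqref{eq:Lambda_f_P_bounded_nu_domain2}, i.e.\ $\rho/\lambda\in[-s_0,\beta/\lambda-s_0]$, so that the argument of $f^*$ lies within $\beta/\lambda$ of $s_0$. Second, Taylor-expand $f^*$ about $s_0$, using $f^*(s_0)=s_0$, $(f^*)^\prime_+(s_0)=1$ and $0\le R_{f^*}(s,t)\le|t-s||(f^*)^\prime_+(t)-(f^*)^\prime_+(s)|$. Third, combine with Lemma \ref{lemma:c_transform_limit}. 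This gives, for every $Q$ and $\theta$,
\[
\bigl|\lambda\Lambda_f^Q[\mathcal L^c_{\theta,\lambda}/\lambda]-E_Q[\mathcal L^{c_\delta}_\theta]\bigr|\le\lambda\psi^*(L_{\mathcal X}/\lambda)+\beta\bigl((f^*)^\prime_+(\beta/\lambda+s_0)-(f^*)^\prime_+(s_0)\bigr),\qquad \lambda\ge\lambda_n.
\]
The right-hand side is deterministic and nonincreasing in $\lambda$. So the whole $\lambda\ge\lambda_n$ supremum collapses to $\sup_\theta\pm(E_P-E_{P_n})[\mathcal L^{c_\delta}_\theta]$ plus twice this error at $\lambda_n$. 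That is exactly where the first exponential and $R_n$ come from; note that $R_n$ is an entropy integral over $\mathcal G$ alone, with no $\lambda$- or $\rho$-grid.

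\textbf{Secondary gap: the $\lambda$-grid in region (b).} The bound \eqref{eq:Delta_L_lambda_bound} and Lemma \ref{lemma:covering_number_OT_DRO} only control $\lambda\mapsto\mathcal L^c_{\theta,\lambda}$. Your integrand $\lambda f^*((\mathcal L^c_{\theta,\lambda}-\rho)/\lambda)$ also carries an explicit prefactor $\lambda$ and a $1/\lambda$ inside $f^*$. Its $\lambda$-derivative is $f^*(u)-u(f^*)^\prime_+(u)+(f^*)^\prime_+(u)\partial_\lambda\mathcal L^c_{\theta,\lambda}$, where $u$ can tend to $-\infty$ as $\lambda\to0^+$.

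Uniform boundedness of this derivative on $(0,\lambda_n)$ is exactly what $f^*$ bounded below and part \ref{assump:f_star_prime_neg_infty} of Assumption \ref{asump:OT_reg_DRO} provide; this is the constant $C_1$. Absolute continuity in $\lambda$ then follows from convexity of $\mathcal L^c_{\theta,\lambda}$ in $\lambda$. That assumption is not what yields the envelope $\beta(f^*)^\prime_+(-\tilde\nu)$. The envelope follows from $0\le\Delta\mathcal L^c_{\theta,\lambda}\le\beta$ and $f^*$ being $(f^*)^\prime_+(-\tilde\nu)$-Lipschitz on $(-\infty,-\tilde\nu]$.

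The paper's reparametrization $\rho=\sup_{\mathcal Z}\mathcal L_\theta+\lambda\nu$, with $\nu$ in the fixed box $[\tilde\nu,-s_0]$, keeps this clean. So do its centered functions $g_{\theta,\lambda,\nu}=\lambda(f^*(-\nu)-f^*(-\Delta\mathcal L^c_{\theta,\lambda}/\lambda-\nu))\in[0,\beta(f^*)^\prime_+(-\tilde\nu)]$. Together these give the three-way covering and the Dudley bound $\widetilde R_n$.
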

\begin{remark}
For  a discussion of the order in $n$ of the error terms $R_n$ and $\widetilde{R}_n$, see  Section \ref{sec:examples_OT_reg_Df_DRO}.
\end{remark}

\subsection{Proof of Theorem \ref{thm:OT_reg_DRO_emp_objective_bound}}
The proof of Theorem \ref{thm:OT_reg_DRO_emp_objective_bound} is somewhat lengthy, and so we spread it over the following  subsections.
\subsubsection{Proof of Theorem \ref{thm:OT_reg_DRO_emp_objective_bound}:  Error Decomposition}\label{sec:OT_reg_error_decomp}
The primary new complication in the analysis of DRO with OT-regularized $f$-divergences (as compared to the OT-DRO case) is the $1/\lambda$ factor on the right-hand side of \eqref{eq:OT_f_div_DRO}. We note that in the Cressie-Read divergence case (without OT), the optimization over $\lambda$ can be simplified analytically; this technique was used in \cite{duchi2021learning} but is not applicable to the more general class of $f$-divergences studied in this work.  Thus the following analysis requires new  techniques.

To address the aforementioned complication, we start by decomposing the problem into two terms, covering different $\lambda$-domains, which we treat by different methods. First, by   a similar calculation to that in  the OT case, we have the error bound
\begin{align}
&\pm\left(\inf_{\theta\in\Theta}\sup_{Q:D_f^c(Q\|P)\leq r}E_Q[\mathcal{L}_\theta]-\inf_{\theta\in\Theta}\sup_{Q:D_f^c(Q\|P_n)\leq r}E_Q[\mathcal{L}_\theta]\right)\\
\leq&\sup_{\theta\in\Theta,\lambda>0}\left\{\pm\left(\inf_{\rho\in\mathbb{R}}\{\rho+\lambda E_{{P}}[f^*((\mathcal{L}^c_{\theta,\lambda}-\rho)/\lambda)]\}-\inf_{\rho\in\mathbb{R}}\{\rho+\lambda E_{{P_n}}[f^*((\mathcal{L}^c_{\theta,\lambda}-\rho)/\lambda)]\}\right)\right\}\notag\,.
\end{align}
Now, for all $\widetilde{D}_n\geq 0$ and all $\lambda_n>0$ (to be chosen later), we can use a union bound to obtain
\begin{align}
&P^n\left(\pm\left(\inf_{\theta\in\Theta}\sup_{Q:D_f^c(Q\|P)\leq r}E_Q[\mathcal{L}_\theta]-\inf_{\theta\in\Theta}\sup_{Q:D_f^c(Q\|P_n)\leq r}E_Q[\mathcal{L}_\theta]\right)\geq \widetilde{D}_n\right)\\
\leq&P^n\left(\sup_{\theta\in\Theta,\lambda\geq\lambda_n}\left\{\pm \left(\lambda\Lambda_f^P[\mathcal{L}^c_{\theta,\lambda}/\lambda]-\lambda\Lambda_f^{P_n}[\mathcal{L}^c_{\theta,\lambda}/\lambda]\right)\right\}\geq \widetilde{D}_n\right)\label{eq:f_div_term_lambda_infty}\\
&+P^n\left(\sup_{\theta\in\Theta,\lambda\in(0,\lambda_n)}\left\{\pm\left(\lambda\Lambda_f^P[\mathcal{L}^c_{\theta,\lambda}/\lambda]-\lambda\Lambda_f^{P_n}[\mathcal{L}^c_{\theta,\lambda}/\lambda]\right)\right\}\geq \widetilde{D}_n\right)\,,\label{eq:f_div_term_lambda_0}
\end{align}
where we changed variables in the inner infima to $\nu=\rho/\lambda$ and introduced the notation
\begin{align}\label{eq:Lambda_f_def}
\Lambda_f^Q[\phi]\coloneqq\inf_{\nu\in\mathbb{R}}\{\nu+E_Q[f^*(\phi-\nu)]\,,
\end{align}
which is defined so long as $\phi^-\in L^1(Q)$. Note that we have the identity
\begin{align}\label{eq:Lambda_shift} 
    \Lambda_f^Q[\phi+\gamma]=\gamma+\Lambda_f^Q[\phi]
\end{align}
for all $\gamma\in\mathbb{R}$.
In the KL case, $\Lambda_{KL}^Q[\phi]=\log E_Q[e^\phi]$ and so $\Lambda_f^Q[\phi]$ can  be thought of as a generalization of the cumulant generating function.

The term \eqref{eq:f_div_term_lambda_infty}, which involves the supremum over $\lambda\geq\lambda_n$, can be handled  in a  manner reminiscent of the OT case, as we show in the next subsection.  The supremum over $\lambda\in(0,\lambda_n)$ in \eqref{eq:f_div_term_lambda_0} must contend with the $1/\lambda$ singularity in the arguments to $\Lambda_f^P$ and $\Lambda_f^{P_n}$; this necessitates an alternative proof strategy. The key new tool is the following lemma, which provides conditions under which the optimization over $\nu$ in \eqref{eq:Lambda_f_def} can be restricted to a bounded domain. As this result is of  interest for applications beyond the current work, we prove it under more general conditions that those of Assumption \ref{asump:OT_reg_DRO}.
\begin{lemma}\label{lemma:nu_domain_bound_main}
 Let $(\Omega,\mathcal{M},Q)$ be a probability space and $\phi:\Omega\to\mathbb{R}$ be measurable with $\phi^-\in L^1(Q)$  and  $\phi\leq \beta$ for some  $\beta\in\mathbb{R}$. 

Let $f\in\mathcal{F}_1(a,b)$ with $a\geq 0$ and suppose $f$ is strictly convex in a neighborhood of $1$ and $s_0 \in\{f^*<\infty\}^o$ ($s_0$ as defined in part \ref{assump:s0_def} of Assumption \ref{asump:OT_reg_DRO}).

 Suppose we have $\tilde{\alpha}\in\mathbb{R}$, $p_{\tilde{\alpha}}\in(0,1)$, and $\nu_{\tilde{\alpha}}\in\mathbb{R}$ such that $\tilde{\alpha}-\nu_{\tilde{\alpha}}\in\overline{\{f^*<\infty\}}$,  $Q(\phi\geq \tilde\alpha) \geq p_{\tilde{\alpha}}$, and $(f^*)^\prime_+(\tilde{\alpha}-\nu_{\tilde{\alpha}})\geq 1/p_{\tilde{\alpha}}$. 
 
Then 
 $\nu_{\tilde{\alpha}}< \beta-s_0$ and
\begin{align}\label{eq:Lambda_f_P_bounded_nu_domain}
\Lambda_f^Q[\phi]=\inf_{\nu\in[\nu_{\tilde{\alpha}},\beta-s_0]} \{\nu+E_Q[f^*(\phi-\nu)]\}\,.
\end{align}

Moreover, if $\alpha\leq \phi\leq \beta$ then we have the simpler result 
\begin{align}\label{eq:Lambda_f_P_bounded_nu_domain2}
\Lambda_f^Q[\phi]=\inf_{\nu\in[\alpha-s_0,\beta-s_0]} \{\nu+E_Q[f^*(\phi-\nu)]\}\,.
\end{align}

\end{lemma}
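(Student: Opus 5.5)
The plan is to study the function $G(\nu)\coloneqq \nu+E_Q[f^*(\phi-\nu)]$ on $\mathbb{R}$, which is convex as an expectation of convex functions plus a linear term. Its one-sided derivatives are controlled by monotone convergence/dominated convergence applied to the difference quotients of $f^*$:
\begin{align}
G^\prime_+(\nu)\leq 1-E_Q[(f^*)^\prime_-(\phi-\nu)]\,,\qquad G^\prime_-(\nu)\leq 1-E_Q[(f^*)^\prime_+(\phi-\nu)]\,,
\end{align}
with the bounds understood appropriately when $f^*$ is infinite. More robustly, I will avoid derivatives of $G$ and compare $G(\nu)$ directly with $G$ at the endpoints, using the convexity inequality $f^*(t)\geq f^*(t_0)+(f^*)^\prime_+(t_0)(t-t_0)$. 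We also need three standard facts about $f^*$ when $a\geq 0$: $f^*$ is nondecreasing, $(f^*)^\prime\geq 0$, and $f^*(s_0)=s_0$ because $f(1)=0$ and $s_0\in\partial f(1)$. Moreover, strict convexity of $f$ near $1$ gives that $f^*$ is differentiable at $s_0$ with $(f^*)^\prime(s_0)=1$. Since $s_0$ lies in the interior of the domain, $f^*$ is finite near $s_0$.

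\textbf{Upper endpoint.} For $\nu>\beta-s_0$ we have $\phi-\nu<s_0$. Monotonicity of $f^*$ together with convexity then gives, for $t\leq s_0$,
\begin{align}
f^*(t)\geq f^*(\beta-\nu')+(f^*)^\prime_-(\beta-\nu')(t-\beta+\nu')\,.
\end{align}
The cleanest route is to show $G(\nu)\geq G(\beta-s_0)$. Writing $\nu=\beta-s_0+h$ with $h>0$, and using that $t\mapsto f^*(t)+t$ is nondecreasing, I would check
\begin{align}
G(\nu)-G(\beta-s_0)=h+E_Q\bigl[f^*(\phi-\nu)-f^*(\phi-\nu+h)\bigr]\geq h-h\,E_Q\bigl[(f^*)^\prime_-(\phi-\nu+h)\bigr]\geq 0\,.
\end{align}
This holds because $\phi-\nu+h\leq s_0$ and $(f^*)^\prime_-\leq (f^*)^\prime_-(s_0)= 1$ there. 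So restricting to $\nu\leq\beta-s_0$ loses nothing.

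\textbf{Lower endpoint.} For $\nu<\nu_{\tilde\alpha}$, on the event $\{\phi\geq\tilde\alpha\}$ we have $\phi-\nu>\tilde\alpha-\nu_{\tilde\alpha}$. Take $h=\nu_{\tilde\alpha}-\nu>0$ and use convexity between $\phi-\nu_{\tilde\alpha}$ and $\phi-\nu$. This gives $f^*(\phi-\nu)-f^*(\phi-\nu_{\tilde\alpha})\geq h\,(f^*)^\prime_+(\phi-\nu_{\tilde\alpha})$, where the slope is nonnegative everywhere and is $\geq (f^*)^\prime_+(\tilde\alpha-\nu_{\tilde\alpha})\geq 1/p_{\tilde\alpha}$ on the event. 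Hence
\begin{align}
G(\nu)-G(\nu_{\tilde\alpha})\geq -h+h\,Q(\phi\geq\tilde\alpha)/p_{\tilde\alpha}\geq 0\,.
\end{align}
Some care is needed with infinite values: if $f^*(\phi-\nu_{\tilde\alpha})=\infty$ on a positive-measure set, then $G(\nu)=\infty$ as well, since $\phi-\nu\geq\phi-\nu_{\tilde\alpha}$ and $f^*$ is nondecreasing. The assumption $\tilde\alpha-\nu_{\tilde\alpha}\in\overline{\{f^*<\infty\}}$ ensures the slope is defined, possibly as $+\infty$, which is harmless. The strict inequality $\nu_{\tilde\alpha}<\beta-s_0$ follows because $(f^*)^\prime_+(\tilde\alpha-\nu_{\tilde\alpha})\geq 1/p_{\tilde\alpha}>1$. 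Since $(f^*)^\prime_+\leq 1$ on $(-\infty,s_0)$ and equals $1$ at $s_0$, this forces $\tilde\alpha-\nu_{\tilde\alpha}>s_0$, and therefore $\nu_{\tilde\alpha}<\tilde\alpha-s_0\leq\beta-s_0$. Here I use that $p_{\tilde\alpha}>0$ forces $\tilde\alpha\leq\beta$.

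\textbf{Simpler case.} When $\alpha\leq\phi\leq\beta$, the second statement is symmetric to the upper-endpoint argument. For $\nu<\alpha-s_0$ we have $\phi-\nu>s_0$, where $(f^*)^\prime_+\geq1$, so $G$ decreases toward $\alpha-s_0$. Equivalently, apply the first part with $\tilde\alpha=\alpha$, $p=1$ and $\nu_{\tilde\alpha}=\alpha-s_0$.

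\textbf{Main obstacle.} The hard step is establishing $(f^*)^\prime(s_0)=1$ from strict convexity of $f$ near $1$. This follows since $\partial f^*(s_0)=\{t: s_0\in\partial f(t)\}$, and strict convexity makes this set $\{1\}$. The remaining work is bookkeeping for infinite values of $f^*$.
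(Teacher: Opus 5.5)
Your proposal is correct and follows essentially the paper's route: both compare $\nu+E_Q[f^*(\phi-\nu)]$ with its values at $\nu_{\tilde\alpha}$ and at $\beta-s_0$ via the supporting-line (equivalently, integrated right-derivative) inequality for the nondecreasing convex $f^*$, and both use $(f^*)^\prime_+(s_0)=1$ together with $\tilde\alpha\leq\beta$ to get $\nu_{\tilde\alpha}<\beta-s_0$. Your subdifferential-inversion argument for $f^*(s_0)=s_0$ and $(f^*)^\prime(s_0)=1$ replaces the paper's citation. A few cleanups are worth making: delete the stray display with the undefined $\nu'$; note that the property used at the upper endpoint is $(f^*)^\prime_-\leq 1$ on $(-\infty,s_0]$, not monotonicity of $f^*(t)+t$; cite $f^*(t)\geq t$ and $\phi^-\in L^1(Q)$ to make the expectations well defined; and in the simpler case run the lower-endpoint argument directly rather than invoking the lemma with $p=1$, which lies outside its hypothesis $p_{\tilde\alpha}\in(0,1)$.
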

\begin{remark}
The simpler distribution-independent case \eqref{eq:Lambda_f_P_bounded_nu_domain2} was previously obtained in Lemma 2.1 of \cite{birrell2025statistical}. The  distribution-dependent case \eqref{eq:Lambda_f_P_bounded_nu_domain}, which we believe to be new,  will be key for obtaining uniform (in $\lambda$) bounds on the terms in \eqref{eq:f_div_term_lambda_0}, thus mitigating the apparent difficulty stemming from the $1/\lambda$ singularity.
\end{remark}
\begin{proof}
 
Standard results in convex analysis imply that $f^*$ is  continuous on $\overline{\{f^*<\infty\}}$  and  the right-derivative $(f^*)^\prime_+$   is non-decreasing (and hence its definition can be naturally extended to $\overline{\{f^*<\infty\}}$). Also note that the assumption $a\geq 0$ implies $f^*$ is non-decreasing and hence $(f^*)^\prime_+\geq 0$.

Let $\nu<\nu_{\tilde{\alpha}}$.   First we show that
\begin{align}\label{eq:f_star_nu_lb_claim}
\nu+f^*(\phi-\nu)\geq \nu+f^*(\phi-\nu_{\tilde{\alpha}})+(f^*)^\prime_+(\tilde{\alpha}-\nu_{\tilde{\alpha}})(\nu_{\tilde{\alpha}}-\nu)1_{\phi\geq\tilde{\alpha}}\,.
\end{align}
Note that the claim is trivial if $f^*(\phi-\nu)=\infty$.  Therefore we suppose $\phi-\nu\in\{f^*<\infty\}$.  For $n\in\mathbb{Z}^+$ large enough we have $\phi-\nu_{\tilde{\alpha}}<\phi-\nu-1/n$,  $\phi-\nu_{\tilde{\alpha}},\phi-\nu-1/n\in\{f^*<\infty\}^o$ and hence we can use absolute continuity of $f^*$ compute
\begin{align}
\nu+f^*(\phi-\nu-1/n)=&\nu+f^*(\phi-\nu_{\tilde{\alpha}})+\int_{\phi-\nu_{\tilde{\alpha}}}^{\phi-\nu-1/n} (f^*)^\prime_+(s) ds\\
\geq&\nu+f^*(\phi-\nu_{\tilde{\alpha}})+(f^*)^\prime_+(\phi-\nu_{\tilde{\alpha}}) (\nu_{\tilde{\alpha}}-\nu-1/n) \notag\\
\geq&\nu+f^*(\phi-\nu_{\tilde{\alpha}})+(f^*)^\prime_+(\tilde{\alpha}-\nu_{\tilde{\alpha}}) (\nu_{\tilde{\alpha}}-\nu-1/n) 1_{\phi\geq \tilde{\alpha}}\,.\notag
\end{align}
Taking $n\to\infty$ and using continuity of $f^*$ on $\overline{\{f^*<\infty\}}$ we can  conclude the claimed bound.

Taking the expectation of both sides of \eqref{eq:f_star_nu_lb_claim}, which exist in $(-\infty,\infty]$ due to the assumption $\phi^-\in L^1(Q)$ and the fact that $f^*(t)\geq t$, we obtain
\begin{align}
\nu+E_Q[f^*(\phi-\nu)]\geq &\nu+E_Q[f^*(\phi-\nu_{\tilde{\alpha}})]+(f^*)^\prime_+(\tilde{\alpha}-\nu_{\tilde{\alpha}})(\nu_{\tilde{\alpha}}-\nu)Q(\phi\geq\tilde{\alpha})\\
\geq &\nu+E_Q[f^*(\phi-\nu_{\tilde{\alpha}})]+p_{\tilde{\alpha}}^{-1}(\nu_{\tilde{\alpha}}-\nu)Q(\phi\geq\tilde{\alpha})\notag\\
\geq  &\nu_{\tilde{\alpha}}+E_Q[f^*(\phi-\nu_{\tilde{\alpha}})]\,.\notag
\end{align}
Thus, for all $\nu<\nu_{\tilde{\alpha}}$, we have proven
\begin{align}\label{eq:Lambda_f_P_bounded_nu_domain_lb_app}
\nu+E_Q[f^*(\phi-\nu)]\geq \nu_{\tilde{\alpha}}+E_Q[f^*(\phi-\nu_{\tilde{\alpha}})]\,.
\end{align}

Now assume that    $f$ is strictly convex in a neighborhood of $1$ and 
    $s_0\coloneqq f_+^\prime(1)\in\{f^*<\infty\}^o$. These  assumptions imply  $f^*(s_0)=s_0$ and  $(f^*)^\prime_+(s_0)=1$ (see Lemma A.9 in \cite{JMLR:v23:21-0100}).   The bound \begin{align}
  (f^*)^\prime_+(\tilde{\alpha}-\nu_{\tilde\alpha})\geq 1/p_{\tilde\alpha}>1=(f^*)^\prime_+(s_0)      
    \end{align}
    implies $\tilde{\alpha}-\nu_{\tilde\alpha}> s_0$. 
    Also assuming that $\phi\leq \beta$, we have  $\tilde{\alpha}\leq \beta$ (otherwise $Q(\phi\geq \tilde{\alpha})=0$) and therefore $\beta-\nu_{\tilde\alpha}\geq \tilde\alpha-\nu_{\tilde\alpha}>s_0$.

Now let $\nu>\beta-s_0$. Noting that $(-\infty,s_0]\subset\{f^*<\infty\}^o$   we can  compute 
\begin{align}
f^*(\phi-(\beta-s_0))=&f^*(\phi-\nu)+\int_{\phi-\nu}^{\phi-(\beta-s_0)}(f^*)^\prime_+(s)ds\\
\leq&f^*(\phi-\nu)+(f^*)^\prime_+(s_0)(\nu-(\beta-s_0))= f^*(\phi-\nu)+\nu-(\beta-s_0)\,.\notag
\end{align}
Therefore
\begin{align}
\nu+E_Q[f^*(\phi-\nu)]\geq\beta-s_0+ E_Q[f^*(\phi-(\beta-s_0))]
\end{align}
for all $\nu>\beta-s_0$. 
Combining this with \eqref{eq:Lambda_f_P_bounded_nu_domain_lb_app} we arrive at \eqref{eq:Lambda_f_P_bounded_nu_domain}.  The simpler case \eqref{eq:Lambda_f_P_bounded_nu_domain2} follows similarly, and was also proven previously in Lemma 2.1 of \cite{birrell2025statistical}.
\end{proof}

\subsubsection{Proof of Theorem \ref{thm:OT_reg_DRO_emp_objective_bound}: $\lambda\geq\lambda_n$ Term}

To bound \eqref{eq:f_div_term_lambda_infty}, we start by showing that
\begin{align}\label{eq:Lambda_limit}
\lim_{\lambda\to\infty}\lambda \Lambda_f^P[\mathcal{L}^c_{\theta,\lambda}/\lambda]=E_P[\mathcal{L}_\theta^{c_\delta}]
\end{align}
(and similarly with $P_n$ replacing $P$),
with explicit error bounds.  The proof will utilize Lemma \ref{lemma:c_transform_limit}, but it will also require  new ingredients in order to handle the contribution from  $f^*$ and the infimum over $\nu$.  Note that we do not invoke the entirety of Assumption \ref{asump:OT_reg_DRO} in the following lemma, as proving \eqref{eq:Lambda_limit}    only requires a subset of those conditions.  
\begin{lemma}\label{lemma:f_div_lambda_infty}
In addition to Assumption \ref{assump:c_L}, part \ref{assump:L_unif_bound} of Assumption \ref{assump:conc_ineq},  and parts \ref{assump:f} - \ref{assump:s0_def} of Assumption \ref{asump:OT_reg_DRO}, assume that $\lambda_0>0$ satisfies $\beta/\lambda_0+s_0\in\{f^*<\infty\}^o$ ($\beta$ from part \ref{assump:L_unif_bound} of Assumption \ref{assump:conc_ineq} and $s_0$ from part \ref{assump:s0_def} of Assumption \ref{asump:OT_reg_DRO}).

Then for all $P\in\mathcal{P}(\mathcal{Z})$, $\lambda\geq \lambda_0$, $\theta\in\Theta$ we have
\begin{align}\label{eq:Lambda_f_P_infinity_bound}
&\left|\lambda\Lambda_f^P[\mathcal{L}^c_{\theta,\lambda}/\lambda]-E_{P}[\mathcal{L}_\theta^{c_\delta}]\right|
\leq\lambda\psi^*(L_{\mathcal{X}}/\lambda)+ \beta \left((f^*)^\prime_+(\beta /\lambda+s_0)-(f^*)^\prime_+(s_0)\right)\,.
\end{align}
\end{lemma}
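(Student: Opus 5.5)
We split the error with the triangle inequality:
\begin{align*}
\left|\lambda\Lambda_f^P[\mathcal{L}^c_{\theta,\lambda}/\lambda]-E_P[\mathcal{L}_\theta^{c_\delta}]\right|\leq \left|\lambda\Lambda_f^P[\mathcal{L}^c_{\theta,\lambda}/\lambda]-E_P[\mathcal{L}^c_{\theta,\lambda}]\right|+\left|E_P[\mathcal{L}^c_{\theta,\lambda}]-E_P[\mathcal{L}_\theta^{c_\delta}]\right|\,.
\end{align*}
Lemma \ref{lemma:c_transform_limit} bounds the second term by $\lambda\psi^*(L_{\mathcal{X}}/\lambda)$. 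So it suffices to show that, for any measurable $\phi$ with $0\leq\phi\leq\beta$ and $\lambda\geq\lambda_0$,
\begin{align*}
\left|\lambda\Lambda_f^P[\phi/\lambda]-E_P[\phi]\right|\leq \beta\left((f^*)^\prime_+(\beta/\lambda+s_0)-(f^*)^\prime_+(s_0)\right)\,.
\end{align*}
We apply this with $\phi=\mathcal{L}^c_{\theta,\lambda}$, which lies in $[0,\beta]$ because $0\leq\mathcal{L}_\theta\leq\beta$ and $c\geq 0$ with $c(z,z)=0$. We use the facts $f^*(s_0)=s_0$ and $(f^*)^\prime_+(s_0)=1$, which hold by the assumptions on $f$, as in the proof of Lemma \ref{lemma:nu_domain_bound_main}.

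\textbf{Lower bound.} Since $f^*$ is convex, $f^*(t)\geq f^*(s_0)+(t-s_0)=t$ for all $t$. Hence for every $\nu$,
\begin{align*}
\nu+E_P[f^*(\phi/\lambda-\nu)]\geq E_P[\phi/\lambda]\,.
\end{align*}
Taking the infimum over $\nu$ and multiplying by $\lambda$ gives $\lambda\Lambda_f^P[\phi/\lambda]\geq E_P[\phi]$.

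\textbf{Upper bound.} We plug in the single choice $\nu=-s_0$, which gives
\begin{align*}
\lambda\Lambda_f^P[\phi/\lambda]\leq \lambda\left(-s_0+E_P[f^*(\phi/\lambda+s_0)]\right)\,.
\end{align*}
For $0\leq\phi\leq\beta$ and $\lambda\geq\lambda_0$, the interval $[s_0,\phi/\lambda+s_0]$ lies inside $[s_0,\beta/\lambda_0+s_0]\subset\{f^*<\infty\}^o$. On this interval $f^*$ is absolutely continuous and $(f^*)^\prime_+$ is nondecreasing, so
\begin{align*}
f^*(\phi/\lambda+s_0)=s_0+\int_{s_0}^{s_0+\phi/\lambda}(f^*)^\prime_+(s)\,ds\leq s_0+\frac{\phi}{\lambda}(f^*)^\prime_+(s_0+\beta/\lambda)\,.
\end{align*}
Multiplying by $\lambda$, subtracting $\lambda s_0$, and taking the expectation gives
\begin{align*}
\lambda\Lambda_f^P[\phi/\lambda]-E_P[\phi]\leq E_P[\phi]\left((f^*)^\prime_+(s_0+\beta/\lambda)-1\right)\leq \beta\left((f^*)^\prime_+(s_0+\beta/\lambda)-(f^*)^\prime_+(s_0)\right)\,.
\end{align*}
The last step uses $\phi\leq\beta$ and the fact that the bracket is nonnegative by monotonicity.

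Combining the lower and upper bounds proves the claimed inequality for $\phi$. Adding the Lemma \ref{lemma:c_transform_limit} term gives \eqref{eq:Lambda_f_P_infinity_bound}.

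The only delicate points are measurability and domain issues. The function $\mathcal{L}^c_{\theta,\lambda}$ is universally measurable, so we work with the completion of $P$. Finiteness of $f^*$ along the integration path is what the hypothesis $\beta/\lambda_0+s_0\in\{f^*<\infty\}^o$ guarantees. Because the argument uses no property of $P$, it holds for every $P$, including $P_n$.
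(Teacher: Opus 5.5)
Your proof is correct and is essentially the paper's argument. The paper's upper bound also comes from evaluating at $\nu=-s_0$ (its choice $\eta=0$) and bounding the Taylor remainder of $f^*$ at $s_0$, and its lower bound is nonnegativity of that remainder, which is exactly your inequality $f^*(t)\geq t$. The only difference is that the paper first invokes Lemma \ref{lemma:nu_domain_bound_main} to restrict $\nu$ to $[-s_0,\beta/\lambda-s_0]$, so that every argument of $f^*$ lies in $\{f^*<\infty\}^o$. Your version does not need that step, since $f^*(t)\geq t\cdot 1-f(1)=t$ holds for all $t$.
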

\begin{remark}
All sufficiently large $\lambda_0$'s satisfy the required condition, due to the assumption that $s_0\in\{f^*<\infty\}^o$. Also note that, as $(f^*)^\prime_+$ is right continuous, if $\psi^*(t)=o(t)$ as $t\to 0^+$ then the upper    bound in  \eqref{eq:Lambda_f_P_infinity_bound} converges to $0$ as $\lambda\to \infty$.
\end{remark}
\begin{proof}
 Again  we  suppress the $\theta$ dependence, as it is not relevant to the computations.    
First use Lemma \ref{lemma:nu_domain_bound_main} (specifically, the simpler case \eqref{eq:Lambda_f_P_bounded_nu_domain2}) along a change of variables to rewrite
\begin{align}
\lambda\Lambda_f^P[\mathcal{L}^c_{\lambda}/\lambda]
=&\inf_{\eta\in[0,\beta]}\{\eta-\lambda s_0+\lambda E_P[f^*((\mathcal{L}^c_\lambda-\eta)/\lambda+ s_0)]\}\,.
\end{align}
For $\lambda\geq \lambda_0$ and $\eta\in[0,\beta]$ we have $(\mathcal{L}^c_\lambda-\eta)/\lambda)+s_0\leq \beta/\lambda_0+s_0$. As $f^*$ is non-decreasing, this implies $(\mathcal{L}^c_\lambda-\eta)/\lambda)+s_0\in\{f^*<\infty\}^o$.  Therefore    we can use  Taylor's formula for convex functions (see Theorem 1 in \cite{liese2006divergences}) together with the identities $f^*(s_0)=s_0$ and $(f^*)^\prime_+(s_0)=1$ (again, see Lemma A.9 in \cite{JMLR:v23:21-0100}) to obtain
\begin{align}
  f^*((\mathcal{L}^c_\lambda-\eta)/\lambda+ s_0)=s_0+ (\mathcal{L}^c_\lambda-\eta)/\lambda+R_{f^*}(s_0,(\mathcal{L}^c_\lambda-\eta)/\lambda+ s_0)
\end{align}
for all $\eta\in[0,\beta]$, where $R_{f^*}$ denotes the  remainder term in the expansion. Using this we obtain
\begin{align}
\lambda\inf_{\nu\in\mathbb{R}}\{\nu + E_{{P}}[f^*(\mathcal{L}^c_{\lambda}/\lambda-\nu]\}
=&E_P[ \mathcal{L}^c_\lambda]+\inf_{\eta\in[0,\beta]} E_P[\lambda R_{f^*}(s_0,(\mathcal{L}^c_\lambda-\eta)/\lambda+ s_0)]\,.\notag
\end{align}
Next, we recall that the remainder term is non-decreasing in $t\in[s,\infty)$   and satisfies
\begin{align}
    0\leq R_{f^*}(s,t)\leq |t-s||(f^*)^\prime_+(t)-(f^*)^\prime_+(s)|\,.
\end{align} 
This allows us to compute
\begin{align}
0\leq&\inf_{\eta\in[0,\beta]} E_P[\lambda R_{f^*}(s_0,(\mathcal{L}^c_\lambda-\eta)/\lambda+ s_0)]\leq  E_P[\lambda R_{f^*}(s_0,\mathcal{L}^c_\lambda/\lambda+ s_0)]\\
\leq& \lambda R_{f^*}(s_0,\beta /\lambda+ s_0)\notag\\
\leq&\beta\left((f^*)^\prime_+(\beta /\lambda+s_0)-(f^*)^\prime_+(s_0)\right)\,.\notag
\end{align}
Using the above together with Lemma \ref{lemma:c_transform_limit} we obtain
\begin{align}
&\left|\lambda\Lambda_f^P[\mathcal{L}^c_{\lambda}/\lambda]-E_{P}[\mathcal{L}^{c_\delta}]\right|\\
\leq&|E_P[ \mathcal{L}^c_\lambda]-E_{P}[\mathcal{L}^{c_\delta}]|+\inf_{\eta\in[0,\beta]} E_P[\lambda R_{f^*}(s_0,(\mathcal{L}^c_\lambda-\eta)/\lambda+ s_0)]\notag\\
\leq& \lambda\psi^*(L_{\mathcal{X}}/\lambda)+ \beta \left((f^*)^\prime_+(\beta /\lambda+s_0)-(f^*)^\prime_+(s_0)\right)\notag
\end{align}
as claimed.
\end{proof}

We are now ready to bound the term \eqref{eq:f_div_term_lambda_infty}. First apply Lemma \ref{lemma:f_div_lambda_infty} with $\lambda_0$ replaced by $\lambda_n$ (increasing in $n$, but with precise dependence on $n$ to be chosen later) to obtain
\begin{align}
&\sup_{\theta\in\Theta,\lambda\geq\lambda_n}\left\{\pm \left(\lambda\Lambda_f^P[\mathcal{L}^c_{\theta,\lambda}/\lambda]-\lambda\Lambda_f^{P_n}[\mathcal{L}^c_{\theta,\lambda}/\lambda]\right)\right\}\\
\leq&\sup_{\theta\in\Theta }\left\{\pm \left(E_P[\mathcal{L}^{c_\delta}_\theta]-E_{P_n}[\mathcal{L}^{c_\delta}_\theta]\right)\right\}+2\sup_{\lambda\geq\lambda_n} \{\lambda\psi^*(L_{\mathcal{X}}/\lambda)+ \beta \left((f^*)^\prime_+(\beta /\lambda+s_0)-(f^*)^\prime_+(s_0)\right)\}\notag\\
\leq&\sup_{\theta\in\Theta }\left\{\pm \left(E_P[\mathcal{L}^{c_\delta}_\theta]-E_{P_n}[\mathcal{L}^{c_\delta}_\theta]\right)\right\}+2\lambda_n\psi^*(L_{\mathcal{X}}/\lambda_n)+ 2\beta \left((f^*)^\prime_+(\beta /\lambda_n+s_0)-(f^*)^\prime_+(s_0)\right)\,.\notag
\end{align}

A straightforward application of McDiarmid's inequality gives
\begin{align}
    &P^n\left(\sup_{\theta\in\Theta }\left\{\pm \left(E_P[\mathcal{L}^{c_\delta}_\theta]-E_{P_n}[\mathcal{L}^{c_\delta}_\theta]\right)\right\}\geq E_{P^n}\left[\sup_{\theta\in\Theta }\left\{\pm \left(E_P[\mathcal{L}^{c_\delta}_\theta]-E_{P_n}[\mathcal{L}^{c_\delta}_\theta]\right)\right\}\right]+\epsilon\right)\\
    \leq& \exp\left(-\frac{2n\epsilon^2}{ \beta^2}\right)\notag
\end{align}
and, using Dudley's entropy integral together with the bound $
\|\mathcal{L}^{c_\delta}_{\theta_1}-\mathcal{L}^{c_\delta}_{\theta_2}\|_\infty\leq \|\mathcal{L}_{\theta_1}-\mathcal{L}_{\theta_2}\|$,
we can  compute
\begin{align}
E_{P^n}\left[\sup_{\theta\in\Theta }\left\{\pm \left(E_P[\mathcal{L}^{c_\delta}_\theta]-E_{P_n}[\mathcal{L}^{c_\delta}_\theta]\right)\right\}\right]\leq 24 n^{-1/2}\int_0^\beta \sqrt{\log N(\tilde{\epsilon},\mathcal{G},\|\cdot\|_\infty)} d\tilde{\epsilon}\,,
\end{align}
where $\mathcal{G}$ is as defined in \eqref{eq:G_c_def}.

Combining these we find
\begin{align}
&P^n\left(\sup_{\theta\in\Theta,\lambda\geq\lambda_n}\left\{\pm \left(\lambda\Lambda_f^P[\mathcal{L}^c_{\theta,\lambda}/\lambda]-\lambda\Lambda_f^{P_n}[\mathcal{L}^c_{\theta,\lambda}/\lambda]\right)\right\}\geq R_n+\epsilon\right)\leq  \exp\left(-\frac{2n\epsilon^2}{ \beta^2}\right)\,,
\end{align}
where
\begin{align}
R_n\coloneqq &2\lambda_n\psi^*(L_{\mathcal{X}}/\lambda_n)+ 2\beta \left((f^*)^\prime_+(\beta /\lambda_n+s_0)-(f^*)^\prime_+(s_0)\right)\label{R_n_def}\\
&+24 n^{-1/2}\int_0^\beta \sqrt{\log N(\tilde{\epsilon},\mathcal{G},\|\cdot\|_\infty)} d\tilde{\epsilon}\,.\notag
\end{align}

\subsubsection{Proof of Theorem \ref{thm:OT_reg_DRO_emp_objective_bound}: $\lambda<\lambda_n$ Term}\label{sec:OT_f_div_lambda_infinity}

Next we bound the term \eqref{eq:f_div_term_lambda_0}, which includes the $\lambda^{-1}$ singularity.  Our approach will be to show that, with high-probability, one can restrict the optimization over $\nu$ to a compact subset and that this results in a Lipschitz dependence on $\lambda$, despite the  apparent singularity as $\lambda\to 0^+$.

Start by combining part \ref{assump:c_bound} of Assumption \ref{assumption:psi_star} 
with part \ref{assump:c_infty_set} of Assumption \ref{asump:OT_reg_DRO} to see that
\begin{align}\label{eq:sup_c_tilde_def}
   \sup \tilde{c}\coloneqq \sup_{z,\tilde{z}:c(z,\tilde{z})<\infty} c(z,\tilde{z})\leq M<\infty   
\end{align}
and 
\begin{align}\label{eq:L_lambda_zero_limit}
\left|\mathcal{L}^c_{\theta,\lambda}(z)-\sup_{\tilde{z}:c(z,\tilde{z})<\infty}\mathcal{L}_\theta(\tilde{z})\right|\leq \lambda \sup \tilde{c}\,,
\end{align}
where $\sup_{\tilde{z}:c(z,\tilde{z})<\infty}\mathcal{L}_\theta(\tilde{z})=\sup_{\tilde{x}}\mathcal{L}_\theta(\tilde{x},y)$.

Define $\Delta\mathcal{L}^c_{\theta,\lambda}=\sup_{\mathcal{Z}}\mathcal{L}_\theta-\mathcal{L}^c_{\theta,\lambda}$ (note that this involves the supremum over all of $\mathcal{Z}$, not just over $\{\tilde{z}:c(z,\tilde{z})<\infty\}$).  As $\sup_{\mathcal{Z}}\mathcal{L}_\theta$ does not depend on $z$, \eqref{eq:Lambda_shift}  implies
\begin{align}
\lambda\Lambda_f^P[\mathcal{L}^c_{\theta,\lambda}/\lambda]=\sup_{\mathcal{Z}}\mathcal{L}_\theta+\lambda\Lambda_f^P[-\Delta\mathcal{L}^c_{\theta,\lambda}/\lambda]\,,
\end{align}
where $-\Delta\mathcal{L}^c_{\theta,\lambda}/\lambda\leq 0$.  To connect this with the bound \eqref{eq:L_lambda_zero_limit}, note that for all $z=(x,y)$ such that $y\in \text{argmax}_y \sup_{\tilde{x}}\mathcal{L}_\theta(\tilde{x},y)$ we have
\begin{align}
-\Delta\mathcal{L}^c_{\theta,\lambda}(z) =\mathcal{L}^c_{\theta,\lambda}(z)-\sup_{\tilde{z}:c(z,\tilde{z})<\infty}\mathcal{L}_\theta(\tilde{z})\leq \lambda \sup \tilde{c}\,,
\end{align}
and therefore
\begin{align}
    P(-\Delta\mathcal{L}^c_{\theta,\lambda}/\lambda\geq -\sup\tilde{c})\geq P_Y(\text{argmax}_y \sup_{\tilde{x}}\mathcal{L}_\theta(\tilde{x},y))\geq \min_y P_Y(y)\,.
\end{align}
Now fix  $p_0\in(0,\min_y P_Y(y))$. By part \ref{assump:f_star_prime_limit} of Assumption \ref{asump:OT_reg_DRO}, there exists $\tilde{\nu}\in\mathbb{R}$ such that $(f^*)^\prime_+(-M-\tilde{\nu})\geq 1/p_0$, and hence also $(f^*)^\prime_+(-\sup\tilde{c}-\tilde{\nu})\geq 1/p_0$; see \eqref{eq:sup_c_tilde_def}.   Lemma \ref{lemma:nu_domain_bound_main} then 
implies 
\begin{align}\label{eq:Lambda_P_rewrite}
\lambda\Lambda_f^P[-\Delta\mathcal{L}^c_{\theta,\lambda}/\lambda]=\inf_{\nu\in[{\tilde{\nu}},-s_0]} \{\lambda\nu+\lambda E_P[f^*(-\Delta\mathcal{L}^c_{\theta,\lambda}/\lambda-\nu)]\}\,.
\end{align}
Thus we have shown that  the infimum over $\nu$ can be restricted to a compact interval, uniformly in $\lambda$; this will be key for proving finite Rademacher complexity bounds on the corresponding family of functions.
 
We cannot immediately use the above argument to restrict the domain of the infimum over $\nu$ in the formula for $\lambda\Lambda_f^{P_n}[-\Delta\mathcal{L}^c_{\theta,\lambda}/\lambda]$ in a way that is uniform in $\lambda$ for all $z\in\mathcal{Z}^n$, as it is possible that  $z$ contains no sample whose label maximizes $\sup_{\tilde{x}}\mathcal{L}_\theta(\tilde{x},y)$; this case  presents a problem when $\lambda\to 0$, as do cases where there are not enough maximizing samples.  However,  the set of such  $z$'s   has probability that is exponentially decaying in $n$ and hence we can use a union bound to handle the low-probability exceptional set where the argument leading to \eqref{eq:Lambda_P_rewrite} fails for $P_n$.  To that end, for $y\in\mathcal{Y}$, $\xi_y\in(0,1)$ define
\begin{align}
 \tilde{\mathcal{Y}}_{y,\xi_y,n}\coloneqq\{\tilde{y}\in\mathcal{Y}^n: |\{i:\tilde{y}_i=y\}|\geq (1-\xi_y)p_y n\}\,,
\end{align}
where  $p_y\coloneqq P_Y(y)$,
and define
\begin{align}
\tilde{\mathcal{Y}}_{\xi,n}\coloneqq\cap_{y\in\mathcal{Y}} \tilde{\mathcal{Y}}_{y,\xi_y,n}\,.
\end{align}
A straightforward application of McDiarmid's inequality to the coordinate maps $Y_i:\mathcal{Y}^n\to\mathcal{Y}$ implies
\begin{align}
 P_Y^n( \tilde{\mathcal{Y}}_{y,\xi_y,n}^c)=&P_Y^n\left(\frac{1}{n}\sum_{i=1}^n1_y(Y_i)-p_y<-\xi_yp_y\right)\\
 \leq&e^{-2n\xi_y^2 p_y^2}\notag
\end{align}
and therefore,  letting  $\xi_y=1-p_0/p_y$ for all $y$,  we obtain
\begin{align}
 P^n( Y^n\in \tilde{\mathcal{Y}}_{\xi,n}^c)\leq &\sum_{y\in\mathcal{Y}} e^{-2n(p_y-p_0)^2}\,.
\end{align}
On the event $Y^n\in\tilde{\mathcal{Y}}_{\xi,n}$, by a similar argument to the population case, we then have 
\begin{align}
    P_n(-\Delta \mathcal{L}^c_{\theta,\lambda}/\lambda\geq -\sup\tilde{c})\geq P_n(\text{argmax}_y \sup_{\tilde{x}}\mathcal{L}_\theta)\geq \min_y\frac{1}{n}|\{i:Y_i=y\}|\geq \min_y(1-\xi_y)p_y= p_0\,,
\end{align}
and hence \eqref{eq:Lambda_shift}  and Lemma \ref{lemma:nu_domain_bound_main} together imply
\begin{align}
  \lambda\Lambda_f^{P_n}[  \mathcal{L}^c_{\theta,\lambda}/\lambda]= \sup_{\mathcal{Z}}\mathcal{L}_\theta+ \inf_{\nu\in[\tilde{\nu},-s_0]}\{\lambda\nu+\lambda E_{P_n}[f^*(-\Delta \mathcal{L}^c_{\theta,\lambda}/\lambda-\nu)]\}\,.
\end{align}

By combining the above results,  we obtain the following bound on \eqref{eq:f_div_term_lambda_0}:
\begin{align}\label{eq:union_bound_Y_tilde}
&P^n\left(\sup_{\theta\in\Theta,\lambda\in(0,\lambda_n)}\left\{\pm\left(\lambda\Lambda_f^P[\mathcal{L}^c_{\theta,\lambda}/\lambda]-\lambda\Lambda_f^{P_n}[\mathcal{L}^c_{\theta,\lambda}/\lambda]\right)\right\}\geq D\right)\\
\leq& P^n\left(\sup_{\theta\in\Theta,\lambda\in(0,\lambda_n)}\left\{\pm\left(\lambda\Lambda_f^P[\mathcal{L}^c_{\theta,\lambda}/\lambda]-\lambda\Lambda_f^{P_n}[\mathcal{L}^c_{\theta,\lambda}/\lambda]\right)\right\}\geq D,Y^n\in \tilde{\mathcal{Y}}_{\xi,n}\right)+\sum_{y\in\mathcal{Y}} e^{-2n(p_y-p_0)^2}\notag\\
\leq&P^n\left(\phi_\pm\geq D\right)+\sum_{y\in\mathcal{Y}} e^{-2n(p_y-p_0)^2}\,,\notag
\end{align}
where
\begin{align}
\phi_\pm\coloneqq& \sup_{\theta\in\Theta,\lambda\in(0,\lambda_n),\nu\in[\tilde{\nu},-s_0]}\left\{\pm\left(  E_{P}[\lambda f^*(-\Delta \mathcal{L}^c_{\theta,\lambda}/\lambda-\nu)]- E_{P_n}[\lambda f^*(-\Delta \mathcal{L}^c_{\theta,\lambda}/\lambda-\nu)]\right)\right\}\\
=&\sup_{g\in\mathcal{G}_{c,f}}\{\pm(E_{P_n}[g]-E_P[g])\}\,,\notag\\
    \mathcal{G}_{c,f}\coloneqq&\{g_{\theta,\lambda,\nu}:\theta\in\Theta,\lambda\in(0,\lambda_n),\nu\in[\tilde{\nu},-s_0]\}\,,\,\,\,g_{\theta,\lambda,\nu}\coloneqq \lambda(f^*(-\nu)-f^*(-\Delta \mathcal{L}^c_{\theta,\lambda}/\lambda-\nu))\,.
\end{align}

Supposing $z,z^\prime\in\mathcal{Z}^n$ differ only at index $j$, we have the bounded difference property
\begin{align}
|\phi_\pm(z)-\phi_\pm(z^\prime)|\leq   &\sup_{\theta\in\Theta,\lambda\in(0,\lambda_n),\nu\in[\tilde{\nu},-s_0]} \frac{\lambda}{n} \left| f^*(-\Delta \mathcal{L}^c_{\theta,\lambda}(z_j)/\lambda-\nu)]-  f^*(-\Delta \mathcal{L}^c_{\theta,\lambda}(z_j^\prime)/\lambda-\nu)\right|\\
\leq&\frac{1}{n}(f^*)^\prime_+(-\tilde{\nu})\sup_{\theta\in\Theta,\lambda\in(0,\lambda_n)}|\Delta \mathcal{L}^c_{\theta,\lambda}(z_j)-\Delta \mathcal{L}^c_{\theta,\lambda}(z_j^\prime)|\notag\\
\leq&\frac{\beta}{n}(f^*)^\prime_+(-\tilde{\nu}) \notag\,.
\end{align}
Therefore McDiarmid's inequality combined with the standard  bound on the mean in terms of the Rademacher complexity yields
\begin{align}
    P^n\left(\phi_\pm\geq 2\mathcal{R}_{\mathcal{G}_{c,f},P,n}+\epsilon\right)\leq \exp\left(-\frac{2n\epsilon^2}{ (\beta(f^*)^\prime_+(-\tilde{\nu}))^2}\right)\,.
\end{align}

To bound the Rademacher complexity, 
we note the following uniform and Lipschitz bounds on the functions $g_{\theta,\lambda,\nu}$ for all $\theta\in\Theta,\lambda\in(0,\lambda_n),\nu\in[\tilde{\nu},-s_0]$; for further details, see Appendix \ref{sm:Lipschitz}:
\begin{align}
  &0\leq g_{\theta,\lambda,\nu} \leq  \beta (f^*)^\prime_+(-\tilde{\nu}) \label{eq:G_cf_unif_bound}\,,\\
  &|g_{\theta_1,\lambda,\nu}-g_{\theta_2,\lambda,\nu}|\leq (f^*)^\prime_+(-\tilde{\nu})\|  \mathcal{L}_{\theta_1}- \mathcal{L}_{\theta_2}\|_\infty\,,\\
 &|g_{\theta,\lambda,\nu_1}-g_{\theta,\lambda,\nu_2}| \leq 2\lambda_n (f^*)^\prime_+(-\tilde{\nu})|\nu_1-\nu_2|\,,\\
 &\left|\partial_\lambda g_{\theta,\lambda,\nu}\right|
\leq f^*(-\tilde{\nu})-\inf f^*+\sup_{t\geq \tilde{\nu}}|t(f^*)^\prime_+(-t)|+(f^*)^\prime_+(-\tilde\nu)(\max\{-s_0,-\tilde{\nu}\}+\sup\tilde{c}  )\,.\label{eq:G_cf_lambda_deriv_bound}
 \end{align}
Regarding the derivative in $\lambda$,  note that $f^*$ is Lipschitz on $(-\infty,d)$ for all $d\in\mathbb{R}$ and $-\Delta \mathcal{L}^c_{\theta,\lambda}/\lambda-\nu$ is bounded above and is absolutely continuous when $\lambda$ is restricted to a compact interval due to the convexity of $\mathcal{L}^c_{\theta,\lambda}$ in $\lambda$.  Therefore $g_{\theta,\lambda,\nu}$ is absolutely continuous in $\lambda$ when restricted to compact intervals.  Hence the above a.s. bound on the derivative implies a   corresponding Lipschitz bound on $(0,\lambda_n)$.

Using the bounds \eqref{eq:G_cf_unif_bound}-\eqref{eq:G_cf_lambda_deriv_bound} we obtain the following covering number bound for all $\epsilon_1,\epsilon_2,\epsilon_3>0$:
\begin{align}
&N(\epsilon_1+\epsilon_2+\epsilon_3,\mathcal{G}_{c,f},\|\cdot\|_\infty)\leq \left\lceil\frac{\lambda_n C_1}{2\epsilon_1}\right\rceil\left\lceil\frac{(-s_0-\tilde{\nu}) \lambda_n C_2}{\epsilon_2}\right\rceil N(\epsilon_3/C_2,\mathcal{G},\|\cdot\|_\infty)\,,\\
&C_1\coloneqq f^*(-\tilde{\nu})-\inf f^*+\sup_{t\geq \tilde{\nu}}|t(f^*)^\prime_+(-t)|+(f^*)^\prime_+(-\tilde\nu)(\max\{-s_0,-\tilde{\nu}\}+\sup\tilde{c}  )\,,\notag\\
&C_2\coloneqq (f^*)^\prime_+(-\tilde{\nu})\,.\
\notag\end{align}
Combining this with the uniform bound \eqref{eq:G_cf_unif_bound}, we obtain a bound on the Rademacher complexity via the Dudley entropy integral:
\begin{align}
&2\mathcal{R}_{\mathcal{G}_{c,f},P,n}\notag\\
\leq& 24n^{-1/2}\int_0^{\beta C_2}\sqrt{\log\left(\left\lceil\frac{\lambda_n C_1}{2h_1(\tilde\epsilon)}\right\rceil\left\lceil\frac{(-s_0-\tilde{\nu}) \lambda_n C_2}{h_2(\tilde\epsilon)}\right\rceil N(h_3(\tilde\epsilon)/C_2,\mathcal{G},\|\cdot\|_\infty)\right)}d\tilde{\epsilon}\coloneqq \widetilde{R}_n\,.\label{eq:Rn_tilde_def}
\end{align}
 
Together, the results in Sections \ref{sec:OT_f_div_lambda_infinity}-\ref{sec:OT_reg_error_decomp} complete the   proof of Theorem \ref{thm:OT_reg_DRO_emp_objective_bound}.

\subsection{DRO with OT-Regularized $f$-Divergences: ERM Bound}
Similarly to Theorem \ref{thm:OT_DRO_emp_optimizer}, one can also show that a solution to the empirical OT-regularized $f$-divergence DRO problem (see \eqref{eq:OT_Reg_DRO_erm_sol} below) is an approximate solution to the population DRO problem with high probability (again allowing for an optimization error tolerance, $\epsilon_n^{\text{opt}}$, and optimization failure probability, $\delta_n^{\text{opt}}$).
\begin{theorem}\label{thm:OT_reg_DRO_conc}
Under Assumption \ref{asump:OT_reg_DRO}, let  $p_0\in(0,\min_y p_y)$ and $\tilde{\nu}\in\mathbb{R}$ such that 
\begin{align}
(f^*)^\prime_+(-M-\tilde{\nu})\geq 1/p_0\,.
\end{align}  Suppose we have $r>0$, $\theta_{*,n}:\mathcal{Z}^n\to \Theta$, $\epsilon_n^{\text{opt}}\geq 0$, ${\delta}^{\text{opt}}_n\in[0,1]$, and $E_n\subset \mathcal{Z}^n$ such that ${P}^n(E^c_n)\leq  {\delta}^{\text{opt}}_n$  and
\begin{align}\label{eq:OT_Reg_DRO_erm_sol}
\sup_{Q: D_f^c(Q\|P_{n})\leq r}E_Q[\mathcal{L}_{\theta_{*,n}}]\leq \inf_{\theta\in\Theta}\sup_{Q: D_f^c(Q\|P_{n})\leq r}E_Q[\mathcal{L}_\theta]+\epsilon_n^{\text{opt}}
\end{align}
on $E_n$, where $P_{n}\coloneqq\frac{1}{n}\sum_{i=1}^n \delta_{z_i}$.

 Then for $n\in\mathbb{Z}^+$, $\epsilon>0$   we have
\begin{align}\label{eq:OT_reg_conc_ineq_empirical_sol}
&{P}^n\left( \sup_{Q: D_f^c(Q\|P)\leq r}E_Q[\mathcal{L}_{\theta_{*,n}}]\geq \inf_{\theta\in\Theta}\sup_{Q: D_f^c(Q\|P)\leq r}E_Q[\mathcal{L}_\theta]+2\max\{R_n,\widetilde{R}_n\}+\epsilon^{\text{opt}}_n+\epsilon\right)\\
\leq&\delta_n^{opt}+2\exp\left(-\frac{n\epsilon^2}{ 2\beta^2}\right)+
2\exp\left(-\frac{n\epsilon^2}{2 (\beta(f^*)^\prime_+(-\tilde{\nu}))^2}\right)+2\sum_{y\in\mathcal{Y}} e^{-2n(p_y-p_0)^2}\,,\notag
\end{align}
where $R_n$ was defined in \eqref{R_n_def} and $\widetilde{R}_n$ was defined in \eqref{eq:Rn_tilde_def}.
\end{theorem}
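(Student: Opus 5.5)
We follow the proof of Theorem \ref{thm:OT_DRO_emp_optimizer}, now using the machinery built for Theorem \ref{thm:OT_reg_DRO_emp_objective_bound}. Write $F_\theta\coloneqq\sup_{Q:D_f^c(Q\|P)\leq r}E_Q[\mathcal{L}_\theta]$ and $F_{n,\theta}\coloneqq\sup_{Q:D_f^c(Q\|P_n)\leq r}E_Q[\mathcal{L}_\theta]$. Using the optimization guarantee \eqref{eq:OT_Reg_DRO_erm_sol} on $E_n$ and adding and subtracting $F_{n,\theta_{*,n}}$ and $\inf_\theta F_{n,\theta}$, we get on $E_n$
\begin{align*}
F_{\theta_{*,n}}-\inf_{\theta\in\Theta}F_\theta\leq \sup_{\theta\in\Theta}\{F_\theta-F_{n,\theta}\}+\sup_{\theta\in\Theta}\{F_{n,\theta}-F_\theta\}+\epsilon_n^{\text{opt}}\,.
\end{align*}
The left side can reach the threshold $2\max\{R_n,\widetilde{R}_n\}+\epsilon_n^{\text{opt}}+\epsilon$ only if $E_n^c$ occurs or one of the two suprema is at least $\max\{R_n,\widetilde{R}_n\}+\epsilon/2$. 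A union bound therefore reduces the claim to $\delta_n^{\text{opt}}$ plus two one-sided tail probabilities, one for each sign $\pm$.

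Each one-sided term is handled exactly as in the proof of Theorem \ref{thm:OT_reg_DRO_emp_objective_bound}, except that we do not pass through the difference of infima over $\theta$. By Proposition \ref{prop:OT_f_div_DRO} and the change of variables $\nu=\rho/\lambda$,
\begin{align*}
F_\theta-F_{n,\theta}\leq\sup_{\lambda>0}\left\{\lambda\Lambda_f^P[\mathcal{L}^c_{\theta,\lambda}/\lambda]-\lambda\Lambda_f^{P_n}[\mathcal{L}^c_{\theta,\lambda}/\lambda]\right\}\,,
\end{align*}
and symmetrically for the other sign. Taking $\sup_\theta$ and splitting the $\lambda$-range at $\lambda_n$ gives the analogues of \eqref{eq:f_div_term_lambda_infty} and \eqref{eq:f_div_term_lambda_0}, each thresholded at $\max\{R_n,\widetilde{R}_n\}+\epsilon/2$.

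We then apply the two established bounds with $\epsilon$ replaced by $\epsilon/2$.
\begin{itemize}
\item For $\lambda\geq\lambda_n$, the bound from Lemma \ref{lemma:f_div_lambda_infty}, combined with McDiarmid and Dudley, gives tail probability $\exp(-2n(\epsilon/2)^2/\beta^2)=\exp(-n\epsilon^2/(2\beta^2))$.
\item For $\lambda<\lambda_n$, the class-count event $\tilde{\mathcal{Y}}_{\xi,n}$ lets us invoke Lemma \ref{lemma:nu_domain_bound_main} to restrict $\nu$ to $[\tilde\nu,-s_0]$. This yields $\exp(-n\epsilon^2/(2(\beta(f^*)^\prime_+(-\tilde\nu))^2))$ plus the exceptional-set term $\sum_y e^{-2n(p_y-p_0)^2}$.
\end{itemize}
Summing over the two signs produces the factors of $2$ in \eqref{eq:OT_reg_conc_ineq_empirical_sol}.

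The main point needing care is that each of the $\pm$ events is a sum of the $\lambda\geq\lambda_n$ and $\lambda<\lambda_n$ suprema only through a maximum over these two regions, not a sum. This is why the threshold is $\max\{R_n,\widetilde{R}_n\}$ per sign rather than $R_n+\widetilde{R}_n$, and we must check that the union bound is applied to each sign separately. A cruder union over the exceptional set (counting it once per sign) suffices and matches the stated constant $2\sum_y e^{-2n(p_y-p_0)^2}$. One also needs to interpret $P^n$ as outer probability if measurability of $\theta_{*,n}$ is in doubt, as in the remark after Theorem \ref{thm:OT_DRO_emp_optimizer}.
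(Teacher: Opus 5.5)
Your proposal is correct and follows the same route as the paper. It uses the same error decomposition on $E_n$ into $\sup_\theta\{F_\theta-F_{n,\theta}\}+\sup_\theta\{F_{n,\theta}-F_\theta\}+\epsilon_n^{\text{opt}}$, a union bound at threshold $\max\{R_n,\widetilde{R}_n\}+\epsilon/2$ for each sign, and then the $\lambda\geq\lambda_n$ and $\lambda<\lambda_n$ computations from the proof of Theorem \ref{thm:OT_reg_DRO_emp_objective_bound} with $\epsilon/2$ in place of $\epsilon$; your tracking of the constants (the halved exponents and the factors of $2$ from the two signs) spells out what the paper's ``union bound followed by essentially the same computations'' leaves implicit.
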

The proof builds off of the proof of Theorem \ref{thm:OT_reg_DRO_emp_objective_bound}, similarly to how  Theorem \ref{thm:OT_DRO_emp_optimizer} built off of Theorem \ref{thm:conc_ineq_min_value}; for  details, see Appendix \ref{sm:OT_ref_ERM}.

\subsection{OT-Regularized $f$-Divergence Examples}\label{sec:examples_OT_reg_Df_DRO}
Here we show that the     KL divergence and  $\alpha$-divergence cases are covered by our results on OT-regularized $f$-divergence DRO; see the conditions in Assumption \ref{asump:OT_reg_DRO}.
\begin{enumerate}
    \item KL Divergence: In this case we have $f_{KL}(t)=t\log(t)$, $f_{KL}^*(t)=e^{t-1}$.  It is straightforward to verify that  $f_{KL}$ is strictly convex, $s_0=1\in \{f^*<\infty\}$, $\inf f_{KL}^*=0$, $\lim_{s\to \infty} (f_{KL}^*)^\prime(s)=\infty$, and
    \begin{align}
        \sup_{s\in[\tilde{\nu},\infty)}|s(f^*_{KL})^\prime(-s)|\leq \max\{e^{-2},-\tilde{\nu}e^{-\tilde{\nu}-1}\}\,.
    \end{align}
    Thus we see that $f_{KL}$ satisfies the requirements of Assumption \ref{asump:OT_reg_DRO} and hence Theorems \ref{thm:OT_reg_DRO_emp_objective_bound} and \ref{thm:OT_reg_DRO_conc} hold for any  
    $p_0\in(0,\min_y p_y)$ and any $\tilde{\nu}\leq -1-M-\log(1/p_0)$.

\item $\alpha$-Divergence, $\alpha>1$: Here, $f_\alpha$ is given by \eqref{eq:f_alpha_def} and $f_\alpha^*$ by \eqref{eq:f_alpha_star}. It is straightforward to verify that $f_\alpha$ is strictly convex, $s_0=\frac{1}{\alpha-1}\in\{f_\alpha^*<\infty\}=\mathbb{R}$, $\inf f_\alpha^*=\frac{1}{\alpha(\alpha-1)}$, $\lim_{s\to\infty}(f_\alpha^*)^\prime(s)=\infty$, and
\begin{align}
     \sup_{s\in[\tilde{\nu},\infty)}|s(f_{\alpha}^*)^\prime(-s)|=(\alpha-1)^{1/(\alpha-1)}\max\{-\tilde{\nu},0\}^{\alpha/(\alpha-1)}\,.
\end{align}
    Thus we see that for all $\alpha>1$, $f_{\alpha}$ satisfies the requirements of Assumption \ref{asump:OT_reg_DRO} and hence Theorems \ref{thm:OT_reg_DRO_emp_objective_bound} and \ref{thm:OT_reg_DRO_conc} hold for all $\tilde{\nu}\leq -M -(\alpha-1)^{-1} p_0^{-(\alpha-1)}$ and all 
    $p_0\in(0,\min_y p_y)$.
    \end{enumerate}

    Similarly to the examples in Section \ref{sec:examples_OT_DRO}, in both the KL and $\alpha$-divergence cases, if we consider $\psi$ satisfying $\psi^*(t)=O(t^{q/(q-1)})$ as $t\to 0$ for some $q>1$ (note that  the cases \eqref{eq:psi_case2} and \eqref{eq:psi_case3} satisfy this for all $q>1$) and let $\lambda_n=Cn^{r}$ with $r=\max\{(q-1)/2,1/2\}$ then $R_n=O(n^{-1/2})$ and $\tilde{R}_n=O(\sqrt{\log(n)/n})$ (provided the entropy integrals are finite, e.g., in the cases considered in Section \ref{sm:example_detials_DN}).

\section{Conclusions}
\label{sec:conclusions}
We derived statistical performance guarantees, in the form of concentration inequalities, for DRO with OT and OT-regularized $f$-divergence neighborhoods.  In the OT case, our results apply to both classification and regression and they improve on prior studies of OT-DRO in that they apply to a wider range of OT cost functions, beyond the $p$-Wasserstein case, and have improved dependence on the neighborhood size parameter. The increased generality of our results facilitates applications to a broader range of adversarial training methods.  We also provide the first study of statistical guarantees for OT-regularized $f$-divergence DRO, a recent class of methods that combine sample reweighting with adversarial sample generation. A drawback of our approach to  OT-regularized $f$-divergence DRO  is that our results only apply to a discrete label space (i.e., classification problems). Addressing this limitation is a direction we intend to explore in future work.

\appendix

 \section{Example Details}\label{sm:example_detials_DN}
 In this appendix we provide  bounds on the Dudley entropy integral term 
 \begin{align}\label{eq:D_n_def}
 D_n   \coloneqq& 12n^{-1/2}\int_0^{\beta}\sqrt{\log \left(\left(\left\lceil \frac{M\lambda_*(h_2(\tilde{\epsilon}))}{2h_2(\tilde{\epsilon})}\right\rceil+1\right) N(h_1(\tilde{\epsilon}),\mathcal{G},\|\cdot\|_\infty)\right)}d\tilde{\epsilon}\,,
\end{align}
  corresponding to each of the examples in Section \ref{sec:examples_OT_DRO}.
We consider the case where $\Theta$ is the unit ball in $\mathbb{R}^k$ with respect to some norm $\|\cdot\|_\Theta$ and we assume we have $L_\Theta\in(0,\infty)$ such that  $\theta\mapsto \mathcal{L}_\theta(z)$ is $L_\Theta$-Lipschitz under this norm for all $z\in\mathcal{Z}$.  In such cases we have the covering number bounds
\begin{align}\label{eq:N_bound_ball_Rk}
    N(\epsilon_1,\mathcal{G},\|\cdot\|_\infty)\leq N(\epsilon_1/L_\Theta,\Theta,\|\cdot\|_\Theta)\leq  (1+2L_\Theta/\epsilon_1)^k\,,
\end{align}
where we used, e.g., the result from Example 5.8 in \cite{wainwright2019high}.  Therefore, letting $h_1(\tilde{\epsilon})=\gamma\tilde{\epsilon}$, $h_2(\tilde{\epsilon})=(1-\gamma)\tilde{\epsilon}$ for $\gamma\in(0,1)$,  we obtain the following bounds on $D_n$.
\begin{enumerate}
    \item  $\psi(t)=\alpha t^q$ for some $\alpha>0$, $q>1$: Using \eqref{eq:lambda_star_case1} and \eqref{eq:N_bound_ball_Rk} we obtain
    \begin{align}
        D_n\leq& 12n^{-1/2}\int_0^{\beta}\sqrt{\log \left(\left(\left\lceil \frac{M(L_{\mathcal{X}}/q)^q(q-1)^{q-1}}{2\alpha((1-\gamma)\tilde{\epsilon})^{q}}\right\rceil+1\right)  \left(1+\frac{2L_\Theta}{\gamma\tilde{\epsilon}}\right)^k\right)}d\tilde{\epsilon}\\
        \leq&12n^{-1/2}\beta\int_0^{1}\sqrt{\log \left(\left\lceil \frac{M(L_{\mathcal{X}}/q)^q(q-1)^{q-1}}{2\alpha\beta^q((1-\gamma)u)^{q}}\right\rceil+1\right) +\frac{2kL_\Theta}{\beta \gamma u}}du\notag\\
         \leq&12n^{-1/2}\beta\int_0^{1}\sqrt{\log \left(\left( \frac{M(L_{\mathcal{X}}/q)^q(q-1)^{q-1}}{2\alpha\beta^q}+1\right)((1-\gamma)u)^{-q}+1\right) +\frac{2kL_\Theta}{\beta \gamma u}}du\notag\\
          \leq&24n^{-1/2}\beta\sqrt{q\left( \frac{M(L_{\mathcal{X}}/q)^q(q-1)^{q-1}}{2\alpha\beta^q}+1\right)(1-\gamma)^{-1} +\frac{2kL_\Theta}{\beta}\gamma^{-1}}\notag\,,
    \end{align}
    where we changed variables to $u=\tilde{\epsilon}/\beta$ and used that $\log(1+A)\leq A$, $\lceil A/s\rceil\leq(A+1)/s$ for all $s\in(0,1)$, $A\geq 0$, and $(1+Bt^{-q})\leq (1+B/t)^q$ for all $B\geq 1$, $t>0$.  Minimizing over $\gamma\in(0,1)$ and using the result
    \begin{align}
        \min_{\gamma\in(0,1)}\{ A(1-\gamma)^{-1}+B \gamma^{-1}\}
=\left(\sqrt{A}+\sqrt{B}\right)^2\,\,\,\text{ for all }A,B>0
    \end{align}
    we obtain
    \begin{align}
        &D_n\leq  24n^{-1/2}\beta\left(q^{1/2}\left( \frac{M(L_{\mathcal{X}}/q)^q(q-1)^{q-1}}{2\alpha\beta^q}+1\right)^{1/2}+\left(\frac{2kL_\Theta}{\beta}\right)^{1/2}\right)\,.
        \end{align}
\item $\psi(t)=\alpha t^q+\eta t$ for some $\alpha,\eta>0$, $q>1$: Using the bound $\lambda^*(\epsilon_2)\leq L_{\mathcal{X}}/\eta$ along with \eqref{eq:N_bound_ball_Rk} we can compute
\begin{align}
 D_n  \leq & 12n^{-1/2}\int_0^{\beta}\sqrt{\log \left(\left(\left\lceil \frac{ML_{\mathcal{X}}}{2\eta (1-\gamma)\tilde{\epsilon}}\right\rceil+1\right) \left(1+\frac{2L_\Theta}{\gamma\tilde{\epsilon}}\right)^k\right)}d\tilde{\epsilon}\\
 \leq& 12n^{-1/2}\int_0^{\beta}\sqrt{\left\lceil \frac{ML_{\mathcal{X}}}{2\eta (1-\gamma)\tilde{\epsilon}}\right\rceil +\frac{2kL_\Theta}{\gamma\tilde{\epsilon}}}d\tilde{\epsilon} \notag\\
 \leq&24n^{-1/2}\beta \sqrt{1+ \frac{ML_{\mathcal{X}}}{2\eta (1-\gamma)\beta} +\frac{2kL_\Theta}{\gamma \beta}} \,.\notag
\end{align}
Minimizing over $\gamma\in(0,1)$ we obtain
\begin{align}\label{eq:Dn_bound_case2}
 D_n  \leq &  24n^{-1/2}\beta \sqrt{1+ \frac{ML_{\mathcal{X}}}{2\eta \beta}+\frac{2kL_\Theta}{\beta}+\frac{2}{\beta}\sqrt{\frac{kML_{\mathcal{X}}L_\Theta }{\eta}}} \,.
\end{align}
\item $\psi(t)=\alpha (e^{qt}-1)$ for some $\alpha,q>0$: Mirroring the computation in the previous case, we obtain the bound \eqref{eq:Dn_bound_case2} except with $\eta$ replaced by $\alpha q$.
\item 
 $\psi(t)=\infty 1_{t>0}$: Using the fact that $\lambda_*=0$ in this case, we have
\begin{align} 
 D_n   =& 12n^{-1/2}\int_0^{\beta}\sqrt{\log \left( N(h_1(\tilde{\epsilon}),\mathcal{G},\|\cdot\|_\infty)\right)}d\tilde{\epsilon}\\
 \leq& 12(k/n)^{1/2}\int_0^{\beta}\sqrt{\log \left( 
 1+2L_\Theta/h_1(\tilde{\epsilon})
 \right)}d\tilde{\epsilon}\notag\\
 \leq&12(2L_\Theta k/n)^{1/2}\int_0^{\beta} h_1(\tilde{\epsilon})
^{-1/2}d\tilde{\epsilon}\notag\,.
\end{align}
Letting $h_1(\tilde{\epsilon})=\gamma\tilde{\epsilon}$ and taking $\gamma\to 1^-$ we find
\begin{align} 
 D_n    \leq& 24(2L_\Theta \beta k/n)^{1/2} \,.
\end{align}
\end{enumerate}

\section{Properties for the Functions $g_{\theta,\lambda,\nu}$}\label{sm:Lipschitz}
In this appendix we provide detailed derivations of the properties of the functions
\begin{align}
g_{\theta,\lambda,\nu}\coloneqq \lambda(f^*(-\nu)-f^*(-\Delta \mathcal{L}^c_{\theta,\lambda}/\lambda-\nu))\,,\,\,\, \theta\in\Theta,\lambda\in(0,\lambda_n),\nu\in[\tilde{\nu},-s_0]\,,
\end{align}
 that were stated and used in Section \ref{sec:OT_f_div_lambda_infinity} as part of the proof of Theorem \ref{thm:OT_reg_DRO_emp_objective_bound}; we work under Assumption \ref{asump:OT_reg_DRO}.
 
 First note that we have the following uniform bound on the   functions  in $\mathcal{G}_{c,f}$, which follows from the bounds   $0\leq \Delta\mathcal{L}^c_{\theta,\lambda}\leq \beta$, $-\Delta \mathcal{L}^c_{\theta,\lambda}/\lambda-\nu\leq -\nu\leq -\tilde{\nu}$ together with $f^*$ being non-decreasing and  $(f^*)^\prime_+(-\tilde{\nu})$-Lipschitz on $(-\infty,-\tilde{\nu}]$:
\begin{align} 
  0\leq \lambda(f^*(-\nu)-f^*(-\Delta \mathcal{L}^c_{\theta,\lambda}/\lambda-\nu)) \leq  \beta (f^*)^\prime_+(-\tilde{\nu}) 
\end{align}
for all $\theta\in\Theta,\lambda\in(0,\lambda_n),\nu\in[\tilde{\nu},-s_0]$.  Using these properties of $f^*$ we also obtain Lipschitz bounds in $\theta$,
\begin{align}
 &\left|\lambda(f^*(-\nu)-f^*(-\Delta \mathcal{L}^c_{\theta_1,\lambda}/\lambda-\nu))-   \lambda(f^*(-\nu)-f^*(-\Delta \mathcal{L}^c_{\theta_2,\lambda}/\lambda-\nu))\right|\\
  \leq&(f^*)^\prime_+(-\tilde{\nu})\left|  \mathcal{L}^c_{\theta_1,\lambda}- \mathcal{L}^c_{\theta_2,\lambda}\right|\notag\\
    \leq&(f^*)^\prime_+(-\tilde{\nu})\|  \mathcal{L}_{\theta_1}- \mathcal{L}_{\theta_2}\|_\infty\notag\,,
\end{align}
and in $\nu$,
\begin{align}
  &\left|   \lambda(f^*(-\nu_1)-f^*(-\Delta \mathcal{L}^c_{\theta,\lambda}/\lambda-\nu_1)) - \lambda(f^*(-\nu_2)-f^*(-\Delta \mathcal{L}^c_{\theta,\lambda}/\lambda-\nu_2)) \right|\\
  \leq& 2\lambda_n (f^*)^\prime_+(-\tilde{\nu})|\nu_1-\nu_2|\,.\notag
\end{align}

Lastly, we consider the dependence on $\lambda$. Start by noting that $f^*$ is Lipschitz on $(-\infty,d)$ for all $d\in\mathbb{R}$ and $-\Delta \mathcal{L}^c_{\theta,\lambda}/\lambda-\nu$ is bounded above and is absolutely continuous when $\lambda$ is restricted to a compact interval due to the convexity of $\mathcal{L}^c_{\theta,\lambda}$ in $\lambda$.  Therefore $\lambda\mapsto\lambda(f^*(-\nu)-f^*(-\Delta \mathcal{L}^c_{\theta,\lambda}/\lambda-\nu))$ is absolutely continuous when restricted to compact intervals and so it is differentiable a.s. We bound the derivative as follows. First compute
\begin{align}
&\partial_\lambda \left( \lambda(f^*(-\nu)-f^*(-\Delta \mathcal{L}^c_{\theta,\lambda}/\lambda-\nu))\right)\\
=&f^*(-\nu)-f^*(-\Delta \mathcal{L}^c_{\theta,\lambda}/\lambda-\nu)
- (f^*)^\prime_+(-\Delta \mathcal{L}^c_{\theta,\lambda}/\lambda-\nu)(-\partial_\lambda\Delta \mathcal{L}^c_{\theta,\lambda}  +\Delta \mathcal{L}^c_{\theta,\lambda}/\lambda)\,.\notag
\end{align}
By assumption, $f^*$ is bounded below and we also have that $f^*$ is non-decreasing, hence
\begin{align}
   0\leq f^*(-\nu)-f^*(-\Delta \mathcal{L}^c_{\theta,\lambda}/\lambda-\nu)\leq f^*(-\tilde{\nu})-\inf f^*\,. 
\end{align}
Using the bound
\begin{align}
  \left|\frac{\mathcal{L}^c_{\theta,\lambda+\Delta\lambda}-\mathcal{L}^c_{\theta,\lambda}}{\Delta \lambda}\right|  \leq \sup\tilde{c}\,,
\end{align}
together with  the assumption that $s(f^*)^\prime_+(-s)$ is bounded on $s\in[\tilde{\nu},\infty)$ for all $\tilde{\nu}\in\mathbb{R}$, we can then compute
\begin{align}
    &| (f^*)^\prime_+(-\Delta \mathcal{L}^c_{\theta,\lambda}/\lambda-\nu)(-\partial_\lambda\Delta \mathcal{L}^c_{\theta,\lambda}  +\Delta \mathcal{L}^c_{\theta,\lambda}/\lambda)|\\
    \leq&| (f^*)^\prime_+(-\Delta \mathcal{L}^c_{\theta,\lambda}/\lambda-\nu)(\Delta \mathcal{L}^c_{\theta,\lambda}/\lambda+\nu)|+|(f^*)^\prime_+(-\Delta \mathcal{L}^c_{\theta,\lambda}/\lambda-\nu)(\nu+\partial_\lambda\Delta \mathcal{L}^c_{\theta,\lambda})  )|\notag\\
        \leq&\sup_{t\geq \tilde{\nu}}|t(f^*)^\prime_+(-t)|+(f^*)^\prime_+(-\tilde\nu)(\max\{-s_0,-\tilde{\nu}\}+\sup\tilde{c}   )\,.\notag
\end{align}
Therefore we obtain
\begin{align}
&\left|\partial_\lambda \left( \lambda(f^*(-\nu)-f^*(-\Delta \mathcal{L}^c_{\theta,\lambda}/\lambda-\nu))\right)\right|\\
\leq &f^*(-\tilde{\nu})-\inf f^*+\sup_{t\geq \tilde{\nu}}|t(f^*)^\prime_+(-t)|+(f^*)^\prime_+(-\tilde\nu)(\max\{-s_0,-\tilde{\nu}\}+\sup\tilde{c}  )\,.\notag
\end{align}
This completes the proof of the properties of the functions $g_{\theta,\lambda,\nu}$ which were used in the main text.

\section{Proof of ERM Bound for DRO with OT-Regularized $f$-Divergences}\label{sm:OT_ref_ERM}

In this appendix we prove Theorem \ref{thm:OT_reg_DRO_conc} from the main text, which is repeated below.
\begin{theorem}\label{thm:OT_reg_DRO_conc_sm}
Under Assumption \ref{asump:OT_reg_DRO}, let  $p_0\in(0,\min_y p_y)$ and $\tilde{\nu}\in\mathbb{R}$ such that 
\begin{align}
(f^*)^\prime_+(-M-\tilde{\nu})\geq 1/p_0\,.
\end{align}  Suppose we have $r>0$, $\theta_{*,n}:\mathcal{Z}^n\to \Theta$, $\epsilon_n^{\text{opt}}\geq 0$, ${\delta}^{\text{opt}}_n\in[0,1]$, and $E_n\subset \mathcal{Z}^n$ such that ${P}^n(E^c_n)\leq  {\delta}^{\text{opt}}_n$  and
\begin{align}\label{eq:OT_Reg_DRO_erm_sol_sm}
\sup_{Q: D_f^c(Q\|P_{n})\leq r}E_Q[\mathcal{L}_{\theta_{*,n}}]\leq \inf_{\theta\in\Theta}\sup_{Q: D_f^c(Q\|P_{n})\leq r}E_Q[\mathcal{L}_\theta]+\epsilon_n^{\text{opt}}
\end{align}
on $E_n$, where $P_{n}\coloneqq\frac{1}{n}\sum_{i=1}^n \delta_{z_i}$.

 Then for $n\in\mathbb{Z}^+$, $\epsilon>0$   we have
\begin{align}\label{eq:OT_reg_conc_ineq_empirical_sol_sm}
&{P}^n\left( \sup_{Q: D_f^c(Q\|P)\leq r}E_Q[\mathcal{L}_{\theta_{*,n}}]\geq \inf_{\theta\in\Theta}\sup_{Q: D_f^c(Q\|P)\leq r}E_Q[\mathcal{L}_\theta]+2\max\{R_n,\widetilde{R}_n\}+\epsilon^{\text{opt}}_n+\epsilon\right)\\
\leq&\delta_n^{opt}+2\exp\left(-\frac{n\epsilon^2}{ 2\beta^2}\right)+
2\exp\left(-\frac{n\epsilon^2}{2 (\beta(f^*)^\prime_+(-\tilde{\nu}))^2}\right)+2\sum_{y\in\mathcal{Y}} e^{-2n(p_y-p_0)^2}\,,\notag
\end{align}
where $R_n$ was defined in \eqref{R_n_def} and $\widetilde{R}_n$ was defined in \eqref{eq:Rn_tilde_def}.
\end{theorem}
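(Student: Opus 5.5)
The plan follows the proof of Theorem \ref{thm:OT_DRO_emp_optimizer}, with the uniform deviation bounds built in the proof of Theorem \ref{thm:OT_reg_DRO_emp_objective_bound}. Write $F_\theta\coloneqq\sup_{Q:D_f^c(Q\|P)\leq r}E_Q[\mathcal{L}_\theta]$ and $F_{n,\theta}$ for the same quantity with $P_n$ in place of $P$. On $E_n$ the ERM condition \eqref{eq:OT_Reg_DRO_erm_sol_sm} gives
\begin{align*}
F_{\theta_{*,n}}-\inf_\theta F_\theta\leq \sup_\theta\{F_\theta-F_{n,\theta}\}+\sup_\theta\{F_{n,\theta}-F_\theta\}+\epsilon_n^{\text{opt}}\,.
\end{align*}
By Proposition \ref{prop:OT_f_div_DRO}, after the change of variables $\nu=\rho/\lambda$, each difference is bounded by
\begin{align*}
S_\pm\coloneqq\sup_{\theta,\lambda>0}\left\{\pm\left(\lambda\Lambda_f^P[\mathcal{L}^c_{\theta,\lambda}/\lambda]-\lambda\Lambda_f^{P_n}[\mathcal{L}^c_{\theta,\lambda}/\lambda]\right)\right\}\,.
\end{align*}
The $\lambda r$ term cancels here, exactly as in Section \ref{sec:OT_reg_error_decomp}. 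Taking a union bound over $E_n^c$ and the events $\{S_+\geq \max\{R_n,\widetilde R_n\}+\epsilon/2\}$ and $\{S_-\geq\max\{R_n,\widetilde R_n\}+\epsilon/2\}$, it suffices to show that each $S_\pm$ exceeds $\max\{R_n,\widetilde R_n\}+\epsilon/2$ with probability at most
\begin{align*}
\exp\!\left(-\frac{n\epsilon^2}{2\beta^2}\right)+\exp\!\left(-\frac{n\epsilon^2}{2(\beta(f^*)'_+(-\tilde\nu))^2}\right)+\sum_{y}e^{-2n(p_y-p_0)^2}\,.
\end{align*}
Summing the two $\pm$ contributions and adding $\delta_n^{\text{opt}}$ then gives \eqref{eq:OT_reg_conc_ineq_empirical_sol_sm}.

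To bound $S_\pm$, split $\lambda$ at $\lambda_n$ as in \eqref{eq:f_div_term_lambda_infty}--\eqref{eq:f_div_term_lambda_0}, so that $S_\pm$ is the maximum of the supremum over $\lambda\geq\lambda_n$ and the supremum over $\lambda<\lambda_n$. For the $\lambda\geq\lambda_n$ piece, Lemma \ref{lemma:f_div_lambda_infty} together with McDiarmid's inequality and the Dudley bound gives
\begin{align*}
P^n\bigl(\sup_{\theta,\lambda\geq\lambda_n}\{\cdots\}\geq R_n+\epsilon/2\bigr)\leq \exp\!\left(-\frac{n\epsilon^2}{2\beta^2}\right)\,,
\end{align*}
which is the earlier bound with $\epsilon$ replaced by $\epsilon/2$. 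For the $\lambda<\lambda_n$ piece, work on the label event $Y^n\in\tilde{\mathcal Y}_{\xi,n}$. Its complement has probability at most $\sum_y e^{-2n(p_y-p_0)^2}$, and on this event Lemma \ref{lemma:nu_domain_bound_main} restricts $\nu$ to $[\tilde\nu,-s_0]$ for both $P$ and $P_n$. The supremum is then dominated by $\phi_\pm$, and McDiarmid's inequality with the Rademacher and entropy bound $\widetilde R_n$ gives probability at most $\exp(-n\epsilon^2/(2(\beta(f^*)'_+(-\tilde\nu))^2))$. Since $\max\{a,b\}\geq t$ requires $a\geq t$ or $b\geq t$, and $t=\max\{R_n,\widetilde R_n\}+\epsilon/2$ dominates both $R_n+\epsilon/2$ and $\widetilde R_n+\epsilon/2$, a union bound over these events gives the required estimate for each sign.

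The main obstacle is keeping the reduction to the restricted-$\nu$ formulation valid \emph{simultaneously} for every $\theta$, including the data-dependent $\theta_{*,n}$. This works because $\tilde{\mathcal Y}_{\xi,n}$ depends only on the labels and not on $\theta$, so the inequality $P_n(-\Delta\mathcal{L}^c_{\theta,\lambda}/\lambda\geq-\sup\tilde c)\geq p_0$ holds uniformly in $\theta$ and $\lambda$ on that single event. As a result, the exceptional set is paid for once per sign and the bound is unaffected by the choice of $\theta_{*,n}$. Without sufficient measurability, $P^n$ throughout is read as outer probability, as in the remark following Theorem \ref{thm:OT_DRO_emp_optimizer}.
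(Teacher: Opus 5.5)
Your proposal is correct and follows the paper's proof. You use the same error decomposition on $E_n$ into $\sup_\theta\{F_\theta-F_{n,\theta}\}+\sup_\theta\{F_{n,\theta}-F_\theta\}+\epsilon_n^{\text{opt}}$, followed by a union bound and the computations from the proof of Theorem \ref{thm:OT_reg_DRO_emp_objective_bound}. You also make explicit the bookkeeping the paper leaves implicit: splitting $\epsilon$ into $\epsilon/2$ per sign produces the $n\epsilon^2/(2\beta^2)$-type exponents, and paying the $\lambda$-splitting and label-event costs once per sign produces the factors of $2$.
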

\begin{proof}
   Using \eqref{eq:OT_Reg_DRO_erm_sol_sm}, on $E_n$ one can compute the  error decomposition bound
   \begin{align}
     &\sup_{Q: D_f^c(Q\|P)\leq r}E_Q[\mathcal{L}_{\theta_{*,n}}]-    \inf_{\theta\in\Theta}\sup_{Q: D_f^c(Q\|P)\leq r}E_Q[\mathcal{L}_\theta]\\
     \leq&\sup_{Q: D_f^c(Q\|P)\leq r}E_Q[\mathcal{L}_{\theta_{*,n}}]-\sup_{Q: D_f^c(Q\|P_{n})\leq r}E_Q[\mathcal{L}_{\theta_{*,n}}]\notag\\
     &+ \inf_{\theta\in\Theta}\sup_{Q: D_f^c(Q\|P_{n})\leq r}E_Q[\mathcal{L}_\theta]-    \inf_{\theta\in\Theta}\sup_{Q: D_f^c(Q\|P)\leq r}E_Q[\mathcal{L}_\theta]+\epsilon_n^{\text{opt}}\notag\\
     \leq&\sup_{\theta\in\Theta}\left\{\sup_{Q: D_f^c(Q\|P)\leq r}E_Q[\mathcal{L}_{\theta}]-\sup_{Q: D_f^c(Q\|P_n)\leq r}E_Q[\mathcal{L}_{\theta}]\right\}\notag\\
     &+\sup_{\theta\in\Theta}\left\{-\left(\sup_{Q: D_f^c(Q\|P)\leq r}E_Q[\mathcal{L}_{\theta}]-\sup_{Q: D_f^c(Q\|P_n)\leq r}E_Q[\mathcal{L}_{\theta}]\right)\right\}+\epsilon_n^{\text{opt}}\notag\,.
   \end{align}
   The claimed result then follows from   a union bound followed by essentially the  same computations as in the proof of Theorem \ref{thm:OT_reg_DRO_emp_objective_bound}.
\end{proof}

\bibliography{ConcIneqOTRegDRO_arxiv.bbl}

\end{document}